\documentclass{article} %
\usepackage{iclr_style/iclr2021_conference,times}

\usepackage{amsmath,amsfonts,bm}

\def\eqref#1{equation~\ref{#1}}

\def\1{\bm{1}}

\def\vr{{\bm{r}}}

\def\vx{{\bm{x}}}
\def\vy{{\bm{y}}}

\DeclareMathAlphabet{\mathsfit}{\encodingdefault}{\sfdefault}{m}{sl}
\SetMathAlphabet{\mathsfit}{bold}{\encodingdefault}{\sfdefault}{bx}{n}

\DeclareMathOperator*{\E}{\mathbb{E}}

\newcommand{\R}{\mathbb{R}}

\newcommand{\cR}{\mathcal{R}}

\usepackage{hyperref}
\usepackage{url}
\usepackage{iclr_style/style}
\usepackage{subcaption}
\usepackage{array}
\usepackage{booktabs}
\usepackage{makecell}
\usepackage{multirow}
\usepackage{float}
\usepackage[margin=10pt,font=small,labelfont=bf,
labelsep=endash]{caption}

\usepackage{chngcntr}

\title{For self-supervised learning,\\ Rationality implies generalization, provably}

\author{Yamini Bansal$^*$ \\
Harvard University
\And  Gal Kaplun\thanks{Equal contribution.  Email: \texttt{\{ybansal, galkaplun\}@g.harvard.edu}}\\
Harvard University
\And  Boaz Barak\thanks{Email: \texttt{b@boazbarak.org}.}
\\
Harvard University 
}

\usepackage{listings}

\DeclareFixedFont{\ttb}{T1}{txtt}{bx}{n}{8} %
\DeclareFixedFont{\ttm}{T1}{txtt}{m}{n}{8}  %

\usepackage{color}
\definecolor{deepblue}{rgb}{0,0,0.5}
\definecolor{deepred}{rgb}{0.6,0,0}
\definecolor{deepgreen}{rgb}{0,0.5,0}

\usepackage{thmtools}
\usepackage{thm-restate}

\usepackage{xspace}

\newcommand{\TestAcc}{\mathsf{Test}}
\newcommand{\TrainAcc}{\mathsf{Train}}
\newcommand{\NTrainAcc}{\mathsf{NTrain}}

\newcommand{\Dtrain}{\cD_{\text{\tiny train}}}
\newcommand{\Dtest}{\cD_{\text{\tiny test}}}

\newcommand{\Tpre}{T_{\text{\tiny pre}}}
\newcommand{\Tfit}{T_{\text{\tiny fit}}}

\usepackage{bbm}

\renewcommand{\vec}[1]{\boldsymbol{#1}}
\renewcommand{\vx}{\vec{x}}
\renewcommand{\vy}{\vec{y}}
\renewcommand{\vr}{\vec{r}}

\newcommand{\characteristic}{\mathbbm{1}}

\let\Pr\relax
\DeclareMathOperator*{\Pr}{\mathrm{\mathbf{Pr}}}

\definecolor{robustness}{HTML}{2A8621} %
\definecolor{rationality}{HTML}{A18803} %
\definecolor{memorization}{HTML}{e50000} %
\definecolor{generalization}{HTML}{0343df} %

\newcommand{\comp}{\mathsf{C}}

\newcommand{\cmdl}{\mathsf{C}^{\text{\tiny mdl}}}
\newcommand{\ccmi}{\mathsf{C}^{\text{\tiny pc}}}
\newcommand{\cmi}{\mathsf{C}^{\text{\tiny dc}}}

\iclrfinalcopy %
\begin{document}

\maketitle

\begin{abstract}
We prove a new upper bound on the generalization gap of classifiers that are obtained by first using self-supervision to learn a representation $r$ of the training~data, and then fitting a simple (e.g., linear) classifier $g$ to the labels.
Specifically, we show that (under the assumptions described below) the generalization gap of such classifiers tends to zero if $\comp(g) \ll n$, where $\comp(g)$ is an appropriately-defined measure of the simple classifier $g$'s complexity, and $n$ is the number of training samples. We stress that our bound is \emph{independent} of the complexity of the representation $r$. 

We do not make any structural or conditional-independence  assumptions on the representation-learning task, which can use \emph{the same training dataset} that is later used for classification.
Rather, we assume that the training procedure satisfies certain natural \emph{noise-robustness} (adding small amount of label noise causes small degradation in performance) and  \emph{rationality}  (getting the wrong label is not better than getting no label at all) conditions that widely hold across many standard architectures.
We show that our bound is non-vacuous for many popular representation-learning based classifiers on CIFAR-10 and ImageNet, including SimCLR, AMDIM and BigBiGAN.
\end{abstract}

\section{Introduction}

The current standard approach for classification is ``end-to-end supervised learning'' where one fits a complex (e.g., a deep neural network) classifier to the given training set \citep{efficientnet, resnet}.
However, modern classifiers are heavily \emph{over-parameterized}, and as demonstrated by \cite{ZhangBHRV17}, can fit 100\% of their training set even when given random labels as inputs (in which case test performance is no better than chance).
Hence, the training performance of such methods is by itself no indication of their performance on new unseen test points.

In this work, we study a different class of supervised learning procedures that have recently attracted significant interest.
These classifiers are obtained by: \textbf{(i)} performing pre-training with a self-supervised task (i.e., without labels) to obtain a complex representation of the data points, and then  \textbf{(ii)} fitting a simple (e.g., linear) classifier on the representation and the labels.
Such \emph{``\textbf{S}elf-\textbf{S}upervised + \textbf{S}imple''} (SSS for short) algorithms are commonly used in natural language processing tasks \citep{bert,gpt3}, and have recently found uses in other domains as well~\citep{wav2vec,ssspeach,ssflow}.\footnote{In this work we focus only on algorithms that learn a representation, ``freeze'' it, and then perform classification using a simple classifier. We do not consider algorithms that ``fine tune'' the entire representation.}

Compared to standard ``end-to-end supervised learning'', SSS algorithms have several practical advantages.
In particular, SSS algorithms can incorporate additional unlabeled data, the representation obtained can be useful for multiple downstream tasks, and they can have improved out-of-distribution performance \citep{sss-hendryks}. Moreover, recent works show that even without additional unlabeled data, SSS algorithms can get close to state-of-art accuracy in several classification tasks \citep{simclrv2, moco,pirl,cmc}. For instance, SimCLRv2 \citep{simclrv2} achieves $79.8\%$ top-1 performance on ImageNet with a variant of ResNet-152, on par with the end-to-end supervised accuracy of this architecture at~$80.5\%$. 

We show that SSS algorithms have another advantage over standard supervised learning---they often have a small \emph{generalization gap}  between their train and test accuracy, and we \emph{prove non-vacuous bounds} on this gap. We stress that SSS algorithms use over-parameterized models to extract the representation, and reuse the \emph{same training data} to learn a simple classifier on this representation. Thus, the final classifier they produce has high complexity by most standard measures and the resulting  representation could ``memorize" the training set. Consequently, it is not a priori evident that their generalization gap will be small.

Our bound is obtained by first noting that the generalization gap of \emph{every} training algorithm is bounded by the sum of three quantities, which we name the \textbf{Robustness gap},  \textbf{Rationality gap}, and  \textbf{Memorization gap} (we call this the \textbf{RRM bound}, see Fact~\ref{RRMbound}).
We now describe these gaps at a high level, deferring the formal definitions to Section~\ref{sec:formal}.
All three gaps involve comparison with a setting where we inject \emph{label noise} by replacing a small fraction $\eta$ of the labels with random values.

The \emph{robustness gap} corresponds to the amount by which training performance degrades by noise injection.
That is, it equals the difference between the standard expected training accuracy (with no label noise) and the expected training accuracy in the noisy setting; in both cases, we measure accuracy with respect to the original (uncorrupted) labels.
The robustness gap is nearly always small, and sometimes provably so (see Section~\ref{sec:three-gaps}).

The \emph{rationality gap} corresponds to the difference between performance on the noisy training samples (on which the training algorithm gets the wrong label) and test samples (on which it doesn't get any label at all), again with respect to uncorrupted labels. An optimal Bayesian procedure would have zero rationality gap, and we show that this gap is typically zero or small in practice.

\begin{figure}[!b]
     \centering
         \includegraphics[width=0.9\textwidth]{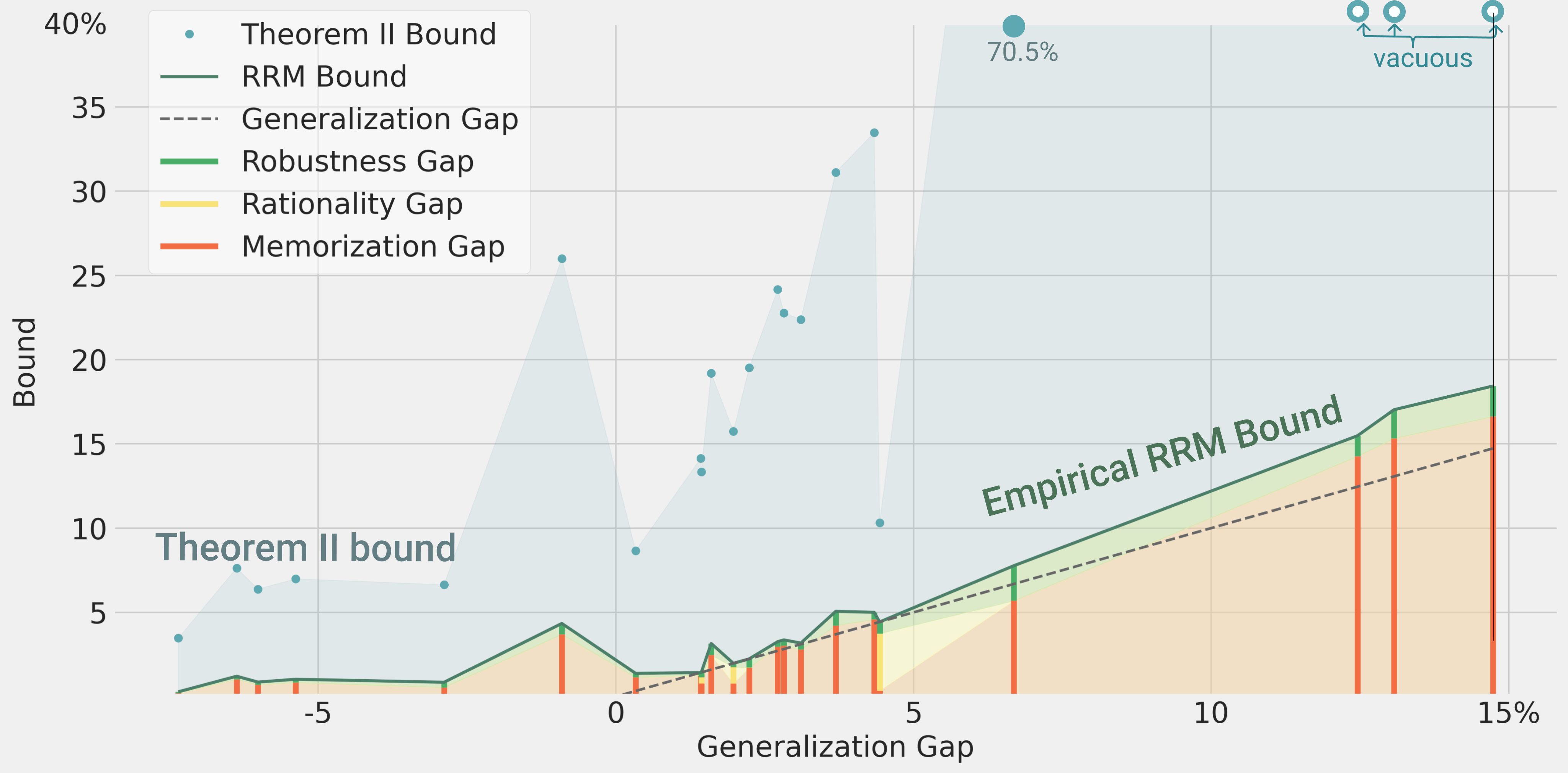}
        \caption{\textbf{Empirical RRM bound.} The components of the RRM bound, as well as the upper bound of Theorem~\ref{thm:main} for a variety of SSS models on the CIFAR-10 dataset with noise $\eta=0.05$. \\
        Each vertical line corresponds to a single model (architecture + self-supervised task + fitting algorithm) and plots the RRM bound for this model. The green component corresponds to robustness, yellow to rationality, and red to memorization. The $x$ axis is the generalization gap, and so the RRM bound is always above the dashed $x=y$ line. A negative generalization gap can occur in algorithms that use augmentation. The blue dots correspond to the bound on the generalization gap obtained by replacing the memorization gap with the bound of Theorem~\ref{thm:main}. See Sections~\ref{sec:experiments} and~\ref{app:fig-details} for more information.\\\\
        }
        \label{fig:intro}
\end{figure}

The \emph{memorization gap}, which often accounts for the lion's share of the generalization gap, corresponds to the difference in the noisy experiment between the training accuracy on the entire train set and the training accuracy on the samples that received the wrong label (both measured with respect to uncorrupted labels).
The memorization gap can be thought of as quantifying the extent to which the classifier can ``memorize'' noisy labels, or act differently on the noisy points compared to the overall train set.
The memorization gap is large in standard ``end-to-end supervised training''. In contrast, our main theoretical result is that for SSS algorithms, the memorization gap is small if the simple classifier has small complexity, \emph{independently} of the complexity of the representation.
As long as the simple classifier is under-parameterized (i.e., its complexity is asymptotically smaller than the sample size), our bound on the memorization gap tends to zero. When combined with small rationality and robustness, 
we get concrete non-vacuous generalization bounds for various SSS algorithms on the CIFAR-10 and ImageNet datasets (see Figures~\ref{fig:intro} and~\ref{fig:IN-MAIN}).

\paragraph{Our results.} In a nutshell, our contributions are the following:

\begin{enumerate}

    \item Our main theoretical result (Theorem~\ref{thm:main}) is that the \emph{memorization gap} of an SSS algorithm is bounded by $O(\sqrt{ C / n})$ where $C$ is the complexity of the simple classifier produced in the ``simple fit'' stage. This bound is oblivious to the complexity of the representation produced in the pre-training and does not make any assumptions on the relationship between the representation learning method and the supervised learning task.
    
    \item We complement this result with an empirical study of the robustness, rationality, and memorization gaps. We show that the RRM bound is typically non-vacuous, and in fact, often close to tight, for a variety of SSS algorithms on the CIFAR-10 and ImageNet datasets, including SimCLR (which achieves test errors close to its supervised counterparts). Moreover, in our experimental study, we demonstrate that the generalization gap for SSS algorithms is substantially smaller than their fully-supervised counterparts. See Figures~\ref{fig:intro}  and~\ref{fig:IN-MAIN} for sample results and Section~\ref{sec:experiments} for more details.
    
    \item We demonstrate that replacing the memorization gap with the upper bound of Theorem~\ref{thm:main} yields a \emph{non-vacuous generalization bound} for a variety of SSS algorithms on CIFAR-10 and ImageNet. Moreover, this bound gets tighter with more data augmentation.

    \item  The robustness gap is often negligible in practice, and sometimes provably so (see Section~\ref{sec:three-gaps}). We show that the rationality gap is small in practice as well. We also prove that a positive rationality gap corresponds to ``leaving performance on the table'', in the sense that we can transform a learning procedure with a large rationality gap into a procedure with better test performance (Theorem~\ref{thm:perfontable}).

\end{enumerate}

One way to interpret our results is that instead of obtaining generalization bounds under statistical assumptions on the distribution, we assume that the rationality and robustness gaps are at most some value (e.g., 5\%).
Readers might worry that we are ``assuming away the difficulty'', but small rationality and robustness gaps do \emph{not}  by themselves imply a small generalization gap. Indeed, these conditions widely hold across many natural algorithms (including not just SSS but also end-to-end supervised algorithms) with both small and large generalization gaps.
As discussed in Section~\ref{sec:three-gaps}, apart from the empirical evidence, there are also theoretical justifications for small robustness and rationality.
See Remark~\ref{rem:rationality} and Appendix~\ref{app:examples} for examples showing the necessity of these conditions.

\subsection{Related Work.}\label{sec:relworks} 

Our work analyses the generalization gap for supervised classifiers that first use self-supervision to learn a representation. We provide a brief exposition of the various types of self-supervised methods in Section~\ref{sec:experiments}, and a more detailed discussion in Appendix \ref{app:Tpre}. 

A variety of prior works have provided generalization bounds for supervised deep learning (e.g., \citet{ NIPS2017_7176,bartlett2017spectrally, dziugaite2017computing, Behnam-1805-12076, pmlr-v75-golowich18a, NIPS2019_9266}, and references therein).
However, many of these bounds provide vacuous guarantees for modern architectures (such as the ones considered in this paper) that have the capacity to memorize their entire training set \citep{ZhangBHRV17}.
While some non-vacuous bounds are known (e.g., \cite{ZhouVAAO19} gave a  96.5\% bound on the error of MobileNet on ImageNet), \cite{belkin2019reconciling, NagarajanK19} have highlighted some general barriers for bounding the generalization gaps of over-parameterized networks that are trained end-to-end.
For similar reasons, standard approaches such as Rademacher complexity cannot directly bound SSS algorithms' generalization gap (see Remark~\ref{rem:radamacher}).

Recently,  \cite{SaunshiPAKK19} and \cite{lee2020predicting} gave generalization bounds for self-supervised based classifiers.
The two works considered special cases of SSS algorithms, such as  \emph{contrastive learning} and \emph{pre-text tasks}.
Both works make strong statistical assumptions of (exact or approximate) \emph{conditional independence} relating the pre-training and classification tasks.
For example, if the pre-training task is obtained by splitting a given image $x$ into two pieces $(x_1,x_2)$ and predicting $x_2$ from $x_1$, then \cite{lee2020predicting}'s results require $x_1$ and $x_2$ to be approximately independent conditioned on their class $y$.
However, in many realistic cases, the two parts of the same image will share a significant amount of information not explained by the label.

Our work applies to general SSS algorithms without such statistical assumptions, at the expense of assuming bounds on the robustness and rationality gaps. There have been works providing rigorous bounds on the robustness gap or related quantities (See Section~\ref{sec:three-gaps}.). However, as far as we know, the rationality gap has not been explicitly defined or studied before. To bound the memorization gap, we use information-theoretic complexity measures. Various information-theoretic quantities have been proposed to bound generalization gap in previous work (see \citet{steinke} and references therein). While these works bounds generalization directly, we bound a different quantity---the memorization gap in the RRM decomposition.

\subsection{Paper Organization} 

Section~\ref{sec:formal}  contains formal definitions and statements of our results.
Section~\ref{sec:three-gaps} provides an overview of prior work and our new results on the three gaps of the RRM bound.
In Section~\ref{sec:experiments}, we describe our experimental setup and detail our empirical results.
Section~\ref{sec:disc} concludes the paper and discusses important open questions.
Section~\ref{sec:proof} contains the proof of Theorem~\ref{thm:main}, while Section~\ref{sec:perfontable} contains  the proof of  Theorem~\ref{thm:perfontable}. 
\cref{app:exp-methods} fully details our experimental setup.\footnote{We provide our code and data in:
\href{https://gitlab.com/harvard-machine-learning/rationality-generalization}{\color{blue} \underline{https://gitlab.com/harvard-machine-learning/}}.
}

\subsection*{Notation}

We use capital letters (e.g., $X$) for random variables, lower case letters (e.g., $x$) for a single value, and bold font (e.g., $\vec{x}$) for tuples (which will typically have dimension corresponding to the number of samples, denoted by $n$).
We use $x_i$ for the $i$-th element of the tuple $\vec{x}$.
We use calligraphic letters (e.g., $\cX,\cD$) for both sets and distributions.

\makeatletter
\renewcommand{\thetheorem}{\Roman{theorem}}%
\makeatother

\section{Formal statement of  results} \label{sec:formal}

A \emph{training procedure} is a (possibly randomized) algorithm $T$ that takes as input a train set $(\vx,\vy) = (x_i,y_i)_{i\in[n]} \in (\cX \times \cY)^n$ and outputs a classifier $f:\cX \rightarrow \cY$.
For our current discussion, we make no assumptions on the type of classifier output or the way that it is computed.
We denote the distribution over training sets in $(\cX\times\cY)^n$ by $\Dtrain$ and the distribution over test samples in $\cX\times\cY$ by $\Dtest$.\footnote{The train and test data often stem from the same distribution (i.e., $\Dtrain = \Dtest^n$), but not always (e.g., it does not hold if we use data augmentation). $\Dtest$ enters the RRM bound only via the rationality gap, so the assumption of small rationality may be affected if $\Dtrain \neq \Dtest^n$, but the RRM bound still holds.} 
The \emph{generalization gap} of a training algorithm $T$ with respect to a distribution pair $\cD = (\Dtrain,\Dtest)$ is the expected difference between its train accuracy (which we denote by $\TrainAcc_{\cD,T}$) and its test performance (which we denote by $\TestAcc_{\cD,T}$). We will often drop subscripts such as $\cD, T$ when they can be inferred from the context. We will also consider the \emph{$\eta$-noisy experiment}, which involves computing the classifier $\tilde{f} = T(\vx,\tilde{\vy})$ where $\tilde{y}_i = y_i$ with probability $1-\eta$ and is uniform  otherwise.

Our starting point is the following observation which we call the \textbf{RRM bound} (for \textbf{R}obustness, \textbf{R}ationality, and \textbf{M}emorization).
The quantities appearing in it are defined in Table~\ref{tab:acc}.

\begin{fact}[RRM bound] \label{RRMbound} 
For every noise parameter $\eta>0$, training procedure $T$ and distribution $\cD=(\Dtrain,\Dtest)$ over training sets and test samples, the RRM bound with respect to $T$ and  $\cD$ is, 
\medskip
\begin{equation*}
\underbrace{\TrainAcc - \TestAcc\vphantom{\Bigl[_+}}_{%
\let\scriptstyle\textstyle
\substack{\text{\textcolor{generalization}{Generalization}} \\  \text{\textcolor{generalization}{gap}}}} \leq
\underbrace{\Bigl[ \TrainAcc - \TrainAcc(\eta) \Bigr]_+}_{%
\let\scriptstyle\textstyle
\substack{\text{\textcolor{robustness}{Robustness}} \\ \text{\textcolor{robustness}{gap}}}}
+
\underbrace{\Bigl[ \NTrainAcc(\eta) - \TestAcc \Bigr]_+}_{%
\let\scriptstyle\textstyle
\substack{\text{\textcolor{rationality}{Rationality}} \\ \text{\textcolor{rationality}{gap}}}}
+
\underbrace{\Bigl[ \TrainAcc(\eta) - \NTrainAcc(\eta) \Bigr]_+}_{%
\let\scriptstyle\textstyle
\substack{\text{\textcolor{memorization}{Memorization}} \\  \text{\textcolor{memorization}{gap}}}}
\end{equation*}
\medskip
where we denote $x_+ = \max(x,0)$.
\end{fact}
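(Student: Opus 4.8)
The plan is to prove the RRM bound as a more-or-less immediate consequence of the definitions together with a single telescoping identity and the fact that clipping at zero ($x\mapsto x_+$) only increases each term. First I would unpack the four quantities in Table~\ref{tab:acc}: $\TrainAcc$ is the expected accuracy of $f=T(\vx,\vy)$ on the (uncorrupted) train set, $\TestAcc$ is the expected accuracy of $f$ on a fresh test sample, $\TrainAcc(\eta)$ is the expected accuracy of $\tilde f = T(\vx,\tilde\vy)$ on the full train set measured against the \emph{original} labels $\vy$, and $\NTrainAcc(\eta)$ is the expected accuracy of $\tilde f$ measured against the original labels but \emph{only on the coordinates $i$ where $\tilde y_i\neq y_i$} (the ``noisy'' points). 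The key observation is that all four are real numbers in $[0,1]$, so I can freely add and subtract them.

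The core step is the telescoping identity
\begin{equation*}
\TrainAcc - \TestAcc = \bigl(\TrainAcc - \TrainAcc(\eta)\bigr) + \bigl(\TrainAcc(\eta) - \NTrainAcc(\eta)\bigr) + \bigl(\NTrainAcc(\eta) - \TestAcc\bigr),
\end{equation*}
which holds trivially since the intermediate terms cancel. Then, since $x \le x_+$ for every real $x$, replacing each of the three summands on the right-hand side by its positive part $[\,\cdot\,]_+$ can only increase the sum, giving exactly the claimed inequality. That is the entire argument; there is essentially no computation, only bookkeeping to make sure the definitions line up with the labels ``Robustness'', ``Rationality'', ``Memorization''.

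The one place that requires genuine care — and what I expect to be the main (indeed only) obstacle — is confirming that the quantity I am subtracting and adding back in the middle, $\NTrainAcc(\eta)$, is \emph{the same} object in the memorization term and in the rationality term, i.e.\ that the decomposition is consistent. In particular one must check there is no hidden dependence on whether the expectation is taken over the noise pattern, the draw of the training set, and (for $\TestAcc$) the draw of the test point, in a way that would break the cancellation. Since each of $\TrainAcc$, $\TrainAcc(\eta)$, $\NTrainAcc(\eta)$, $\TestAcc$ is by definition a single scalar (an expectation over all relevant randomness), the cancellation is automatic and no independence or distributional assumption is needed — which is precisely why the statement holds for \emph{every} $\eta>0$, \emph{every} $T$, and \emph{every} $\cD$. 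I would close by remarking that the only subtlety worth flagging to the reader is that $\NTrainAcc(\eta)$ is measured against the \emph{true} labels $y_i$ even though the learner saw the corrupted labels $\tilde y_i$ on exactly those coordinates, so a ``rational'' learner that ignores obviously-noisy labels is what makes the rationality gap small; but this is interpretation, not part of the proof.
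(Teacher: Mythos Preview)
Your proposal is correct and matches the paper's own argument exactly: the paper simply remarks that the RRM bound ``directly follows from the fact that $x_+ \geq x$ for every $x$,'' i.e., the telescoping identity you wrote plus monotonicity of $[\cdot]_+$. Your additional care about well-definedness of $\NTrainAcc(\eta)$ as a single scalar is appropriate but goes beyond what the paper spells out.
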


\bgroup
\def\arraystretch{1.5}

\begin{table}[bht]
\centering
\caption{The measurements of accuracy in the RRM bound, all with respect to a training algorithm $T$, distributions $(\Dtrain,\Dtest)$ and parameter $\eta>0$. The \emph{robustness gap} is $\max(\TrainAcc-\TrainAcc(\eta),0)$, the \emph{rationality gap} is $\max(\NTrainAcc(\eta)-\TestAcc,0)$, and the \emph{memorization gap} is $\max( \TrainAcc(\eta)-\NTrainAcc(\eta),0)$.} 
    \label{tab:acc}
    \begin{tabular}{|c|m{1.8in}|m{2.5in}|}
 \hline
         Quantity& Training& \quad \quad \quad \quad \quad \quad
         Measurement
         \\ 
\hline
    $\TestAcc_{\cD,T}\;\;\;$ & $f=T(\vx,\vy)$ for $(\vx,\vy) \sim \Dtrain$ & $\Pr[f(x)=y]$ for $(x,y) \sim \Dtest$.   \\
    
    \hline
    
    $\TrainAcc_{\cD,T}\;\;\;$ & $f=T(\vx,\vy)$ for $(\vx,\vy) \sim \Dtrain$ & $\Pr[f(x_i)=y_i]$ for train sample $(x_i,y_i)$.  \\
 
 \hline
   $\TrainAcc_{\cD,T}(\eta) \; $ & $\tilde{f}=T(\vx,\tilde{\vy})$ for $(\vx,\vy) \sim \Dtrain$, $\tilde{y}_i = y_i$ w.p. $1-\eta$, uniform o/w & $\Pr[\tilde{f}(x_i)=y_i]$ for train sample $(x_i,\tilde{y}_i)$ where $y_i$ \emph{original} label for $x_i$.   \\
 \hline
 
 $\NTrainAcc_{\cD,T}(\eta)$ & $\tilde{f}=T(\vx,\tilde{\vy})$ for $(\vx,\vy) \sim \Dtrain$, $\tilde{y}_i = y_i$ w.p. $1-\eta$, uniform o/w & $\Pr[\tilde{f}(x_i)=y_i | \tilde{y}_i \neq y_i]$ for a corrupted train sample $x_i$ where $y_i$ \emph{original} label for $x_i$.    \\
 \hline
    \end{tabular}
\end{table}

\egroup

The RRM bound is but an observation, as it directly follows from the fact that $x_+ \geq x$ for every $x$. However, it is a very useful one.
As mentioned above, for natural algorithms, we expect both the \emph{robustness} and \emph{rationality} components of this gap to be small, and
hence the most significant component is the \emph{memorization gap}.
In this work we show a rigorous upper bound on this gap for SSS models.

We define formally an \emph{SSS Algorithm} to be a training procedure $T= (\Tpre,\Tfit)$ that is obtained by \textbf{(1)} first training $\Tpre$ on $\vx \in \cX^n$ to get a representation $r:\cX \rightarrow \cR$ and then \textbf{(2)} training $\Tfit$ on $(r(\vx),\vy)$ for $\vy\in \cY^n$ to obtain a classifier $g:\cR \rightarrow \cY$.
The classifier output by $T$ is $f:\cX \rightarrow\cY$ defined as $f(x)=g(r(x))$.  Our main theoretical result is the following.

\begin{theorem}[Memorization gap bound] \label{thm:main}
For every SSS Algorithm $T=(\Tpre,\Tfit)$, noise parameter $\eta>0$ and distribution $\cD$ over $\cX^n\times \cY^n$:
{
\setlength{\abovedisplayskip}{3pt}
\setlength{\belowdisplayskip}{3pt}
$$
\text{\textcolor{memorization}{Memorization gap}}(T) = \left( \TrainAcc_{T,\cD}(\eta) - \NTrainAcc_{T,\cD}(\eta) \right)_+ \leq O( \sqrt{\tfrac{\comp_\eta(\Tfit)}{n}} \cdot \tfrac{1}{\eta})
$$
}where $\comp_\eta(\Tfit)$ is a complexity measure of the second phase training procedure, which in particular is upper bounded by the number of bits required to describe the classifier $g$ (See Definition~\ref{def:complexity}.). 
\end{theorem}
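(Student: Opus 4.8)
The plan is to reduce the claim to a purely information-theoretic statement about the second stage $\Tfit$, exploiting that the representation $r=\Tpre(\vx)$ is computed without ever seeing the labels, so the only channel through which noisy labels can be ``memorized'' is the simple classifier $g=\Tfit(r(\vx),\tilde{\vy})$ built from them. Throughout I would condition on the clean sample $(\vx,\vy)$ and on the realized values $r(\vx)=(r(x_1),\dots,r(x_n))$ (absorbing $\Tpre$'s internal randomness), so that the only remaining randomness is the coordinatewise label noise $\tilde{\vy}$ and the internal coins of $\Tfit$. Write $P_i:=g(r(x_i))\in\cY$ for the (possibly randomized) prediction on the $i$-th training point, and set $\eta':=\Pr[\tilde y_i\neq y_i]=\eta(1-1/|\cY|)$.

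First I would prove a per-coordinate identity. Because the noise is independent across coordinates, a one-line calculation gives $\TrainAcc(\eta)-\NTrainAcc(\eta)=\tfrac1n\sum_{i}(1-\eta')\,\delta_i$, where $\delta_i:=\Pr[P_i=y_i\mid \tilde y_i=y_i]-\Pr[P_i=y_i\mid \tilde y_i\neq y_i]$ (both probabilities over the noise on the other coordinates and the coins of $\Tfit$). Each $\delta_i$ is at most the total-variation distance $d_{\mathrm{TV},i}:=d_{\mathrm{TV}}\!\left(\mathrm{law}(P_i\mid \tilde y_i=y_i),\ \mathrm{law}(P_i\mid \tilde y_i\neq y_i)\right)$, so the memorization gap — dropping the harmless outer $(\cdot)_+$, since the final bound is nonnegative — is at most $\tfrac1n\sum_i d_{\mathrm{TV},i}$.

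Next I would convert each $d_{\mathrm{TV},i}$ into conditional mutual information. Let $B_i=\characteristic[\tilde y_i=y_i]$. Applying Pinsker's inequality to each of the two conditional laws of $P_i$ relative to their $B_i$-average shows $I(P_i;B_i\mid \vx,\vy)\ \ge\ 2\eta'(1-\eta')\,d_{\mathrm{TV},i}^2$, i.e. $d_{\mathrm{TV},i}\le \sqrt{I(P_i;B_i\mid \vx,\vy)/(2\eta'(1-\eta'))}$; this is precisely the step that introduces the factor polynomial in $1/\eta$, because $\tilde y_i$ conveys genuinely new information about coordinate $i$ only on the $\approx\eta$ fraction of runs in which it is actually resampled. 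By the data-processing inequality $I(P_i;B_i\mid\vx,\vy)\le I(g;\tilde y_i\mid \vx,\vy)$ ($P_i$ is a function of $g$ given $r(\vx)$, and $B_i$ is a function of $\tilde y_i$), and since the $\tilde y_i$ are mutually independent given $(\vx,\vy)$, subadditivity of mutual information gives $\sum_i I(g;\tilde y_i\mid\vx,\vy)\le I(g;\tilde{\vy}\mid\vx,\vy)$; as $g$ depends on $\vy$ only through $\tilde{\vy}$ (Markov chain $\vy-(\tilde{\vy},r(\vx))-g$), one may further drop $\vy$ from the conditioning to get $I(g;\tilde{\vy}\mid r(\vx))$. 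Combining with one Cauchy--Schwarz over the $n$ coordinates yields $\tfrac1n\sum_i d_{\mathrm{TV},i}\le \sqrt{I(g;\tilde{\vy}\mid r(\vx))/(2\eta'(1-\eta')\,n)}$, and finally $I(g;\tilde{\vy}\mid r(\vx))\le H(g\mid r(\vx))$, which is bounded by the expected description length of $g$. This mutual information (or, crudely, the description length) is exactly what $\comp_\eta(\Tfit)$ denotes in Definition~\ref{def:complexity}, and taking the outer expectation over $(\vx,\vy)$ with Jensen's inequality ($\sqrt{\cdot}$ concave) gives the stated $O\!\big(\sqrt{\comp_\eta(\Tfit)/n}\cdot 1/\eta\big)$.

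The main obstacle is the middle passage: going from the per-coordinate success gap $\delta_i$ to a mutual-information quantity that simultaneously has the right $\eta$-dependence and the right conditioning, so that data processing and subadditivity collapse the sum of $n$ per-coordinate terms into the single quantity $I(g;\tilde{\vy}\mid r(\vx))$. In particular one must \emph{not} condition the $i$-th term on the other noise coordinates $\tilde y_{-i}$ (as the parity function shows, the per-coordinate informations would then fail to be subadditive), and one must handle $g$ being a randomized classifier / a distribution over hypotheses and the reduction $I\le H\le(\text{description length})$ carefully — which is where the several concrete instantiations of $\comp_\eta$ in Definition~\ref{def:complexity} enter.
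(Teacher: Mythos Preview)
Your argument is correct, and indeed proves the theorem (in fact with a slightly sharper $1/\sqrt{\eta}$ in place of the stated $1/\eta$). The route, however, differs from the paper's in one structural respect worth noting.

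The paper does \emph{not} work per coordinate. Instead it absorbs the uniformly random index $i\sim[n]$ into the sample space, defines the single pair of Bernoulli variables $Z=\characteristic[g(r_i)=y_i]$ and $B=\characteristic[\tilde y_i\neq y_i]$ on this augmented space, and applies a Pinsker-type lemma once to get $|\E[Z]-\E[Z\mid B{=}1]|\le \sqrt{I(Z;B)/2}/\E[B]$. The quantity $I(Z;B)$ is then bounded by $I(\Delta;N)$ with $\Delta=g(r_i)-y_i$ and $N=\tilde y_i-y_i$ (index $i$ random), which is \emph{exactly} $\cmi/n$. So the paper lands directly on the smallest of the three complexity measures, and the relation $\cmi\le\ccmi\le\cmdl$ is proved separately (Theorem~\ref{thm:app-relations}). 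Your per-coordinate Pinsker followed by Cauchy--Schwarz and subadditivity naturally lands on $\ccmi$ (or $\cmdl$), not on $\cmi$; on the other hand, because you square the TV distances before summing, you pick up only a $1/\sqrt{\eta'(1-\eta')}$ rather than the $1/\eta'$ coming from dividing by $\E[B]$. So each approach buys something: the paper's averaged-index trick yields the tightest complexity measure $\cmi$ (which is also the one estimated in the experiments), while your decomposition yields a better $\eta$-dependence at the cost of the larger $\ccmi$.

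Two small cleanups. First, the detour through $I(g;\tilde y_i)$ and the subadditivity step is unnecessary if your target is $\ccmi$: data processing on $B_i\to\tilde y_i$ alone already gives $I(P_i;B_i)\le I(P_i;\tilde y_i)$, and $\sum_i I(P_i;\tilde y_i)=\ccmi$ by definition. Second, the ``drop $\vy$ from the conditioning'' remark is superfluous: you have fixed $(\vx,\vy)$ throughout, and that is precisely the regime in which all three complexity measures of Definition~\ref{def:complexity} are defined, so there is nothing to drop before taking the final expectation over $\Dtrain$ with Jensen.
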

\makeatletter
\renewcommand{\thetheorem}{\thesection.\arabic{theorem}}%
\@addtoreset{theorem}{section}%
\makeatother

\subsection{Complexity measures} \label{sec:theory}

We now define three complexity measures, all of which can be plugged in as the measure in Theorem~\ref{thm:main}.
The first one, $\cmdl$, is the minimum description length of a classifier.
The other two measures $\ccmi$ and $\cmi$ are superficially similar to Rademacher Complexity (cf. \cite{rademacher}) in the sense that they capture the ability of the hypothesis to correlate with random noise.

\begin{definition}[Complexity of training procedures] \label{def:complexity}
Let $T$ be a training procedure taking as input a set $( \vr,\vy)= \{(r_i,y_i) \}_{i=1}^n \in (\cR \times \cY)^n$ and outputting a classifier $g:\vr\to \cY$ and  let $\eta>0$.
For every training set $(\vr,\vy)$, we define the following three complexity measures with respect to $\vr,\vy,\eta$:

\begin{itemize}
    \vspace{-0.5ex}\item The \emph{minimum description length} of $T$ is defined as $\cmdl_{\vr,\vy,\eta}(T) \defeq H(g)$ where we consider the model $g$ as a random variable arising in the $\eta$-noisy experiment.\footnote{The name ``minimum description length'' is justified by the operational definition of entropy relating it to the minimum amortized length of a prefix-free encoding of a random variable.}
    
    \vspace{-0.5ex}\item The \emph{prediction complexity} of $T$ is defined as $\ccmi_{\vr,\vy,\eta}(T) \defeq \sum_{i=1}^n I( g(r_i) ;  \tilde{y}_i)$ where the $\tilde{y}_i$'s are the labels obtained in the $\eta$-noisy experiment.
    
    \vspace{-0.5ex}\item The (unconditional) \emph{deviation complexity} of $T$ is defined as $\cmi_{\vr,\vy,\eta}(T) \defeq n \cdot I(g(r_i)-y_i \; ; \; \tilde{y}_i - y_i)$ where the random variables above are taken over $i\sim [n]$ and subtraction is done modulo $|\cY|$, identifying $\cY$ with the set $\{0,\ldots, |\cY|-1\}$.

\end{itemize}
\end{definition}

Conditioned on $\vy$ and the choice of the index $i$, the deviations $g(r_i)-y_i$ and $\tilde{y}_i-y_i$ determine the predictions $g(r_i)$ and noisy labels $\tilde{y}_i$, and vice versa.
Hence we can think of $\cmi$ as an ``averaged'' variant of $\ccmi$, where we make the choice of the index $i$ part of the sample space for the random variables.
While we expect the two measures to be approximately close, the fact that $\cmi$ takes $i$ into the sample space makes it easier to estimate this quantity in practice without using a large number of experiment repetitions (see Figure~\ref{fig:cmi} for convergence rates).
The measure $\cmdl$ is harder to evaluate in practice, as it requires finding the optimal compression scheme for the classifier.
Section~\ref{sec:proof} contains the full proof of Theorem~\ref{thm:main}.
It is obtained by showing that: \textbf{(i)} for every $\vr,\vy,\eta$, and $T$ it holds that $\cmi_{\vr,\vy,\eta}(T) \leq \ccmi_{\vr,\vy,\eta}(T) \leq \cmdl_{\vr,\vy,\eta}(T)$, and \textbf{(ii)} for every SSS algorithm $T=(\Tpre,\Tfit)$ and distribution $\cD=(\Dtrain,\Dtest)$, the memorization gap of $T$ is at most 
\begin{equation}
\sqrt{\E_{(\vx,\vy)\sim \Dtrain} \cmi_{\Tpre(\vx),\vy,\eta}(\Tfit)} \;\bigg/\; \left(\eta \sqrt{2n}\right)  \;. \label{eq:empiricalcomplexity}
\end{equation}
It is the quantity (\ref{eq:empiricalcomplexity}) that we compute in our experiments.

\section{Proof of Theorem \ref{thm:main}} \label{sec:proof}

We now prove Theorem \ref{thm:main}. We start by relating our three complexity measures.
The following theorem shows that $\cmi$ is upper bounded by $\ccmi$, which in turn is bounded by the entropy of $g$.

\begin{theorem}[Relation of complexity measures] \label{thm:app-relations}
For every $\vr,\vy,\eta>0$, and $T$
\[
 \cmi_{\vr,\vy,\eta}(T) \leq \ccmi_{\vr,\vy,\eta}(T) \leq \cmdl(T)
\]
where $g$ is the classifier output by $T$ (considered as a random variable).
\end{theorem}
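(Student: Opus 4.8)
### Proof Proposal

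The plan is to establish the two inequalities separately, since they are conceptually distinct: the second ($\ccmi \le \cmdl$) is a data-processing/entropy bound, while the first ($\cmi \le \ccmi$) is a convexity statement about how the "averaged over $i$" mutual information compares to the sum over individual coordinates.

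For the inequality $\ccmi_{\vr,\vy,\eta}(T) \le \cmdl_{\vr,\vy,\eta}(T)$: recall $\ccmi = \sum_{i=1}^n I(g(r_i);\tilde y_i)$ and $\cmdl = H(g)$, both computed in the $\eta$-noisy experiment (with $\vr,\vy$ fixed). First I would note that $g(r_i)$ is a deterministic function of $g$ (once $\vr$ is fixed), so by the data-processing inequality $I(g(r_i);\tilde y_i) \le I(g;\tilde y_i)$. Hence $\ccmi \le \sum_i I(g;\tilde y_i)$. Next, since the noise is applied independently to each coordinate, the $\tilde y_i$ are mutually independent (conditioned on the fixed $\vy$), so by superadditivity of mutual information under independent "side" variables — i.e. $\sum_i I(g;\tilde y_i) \le I(g;\tilde y_1,\dots,\tilde y_n)$, which follows from the chain rule $I(g;\tilde{\vy}) = \sum_i I(g;\tilde y_i \mid \tilde y_{<i})$ together with the fact that $I(g;\tilde y_i\mid \tilde y_{<i}) \ge I(g;\tilde y_i)$ when $\tilde y_i \perp \tilde y_{<i}$ (a standard consequence of independence, provable by expanding both sides). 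Finally $I(g;\tilde{\vy}) \le H(g)$. Chaining these gives $\ccmi \le H(g) = \cmdl$.

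For the inequality $\cmi_{\vr,\vy,\eta}(T) \le \ccmi_{\vr,\vy,\eta}(T)$: here $\cmi = n\cdot I(g(r_i)-y_i \,;\, \tilde y_i - y_i)$ where $i\sim[n]$ is uniform and part of the probability space. The key observation is that, conditioned on $i$, the pair $(g(r_i)-y_i,\ \tilde y_i-y_i)$ is in bijection with $(g(r_i),\tilde y_i)$ (since $y_i$ is fixed given $\vr,\vy$). So I would introduce $i$ as an auxiliary uniform random variable and write $I(g(r_i)-y_i ; \tilde y_i - y_i) \le I(g(r_i)-y_i, i \,;\, \tilde y_i-y_i)$, then expand via the chain rule as $I(i;\tilde y_i - y_i) + I(g(r_i)-y_i ; \tilde y_i - y_i \mid i)$. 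The plan is to argue the first term vanishes or is dominated appropriately — actually the cleaner route is to use $I(A;B) \le I(A;B\mid C)$ is false in general, so instead I would directly compare $n \cdot I(g(r_i)-y_i;\tilde y_i-y_i)$ with $\sum_i I(g(r_i);\tilde y_i)$ by noting $n\cdot I(U;V) = n\cdot I(U;V)$ where the distribution of $(U,V)$ is the $i$-average of the coordinate distributions, and using convexity of mutual information in the appropriate argument: mutual information $I(U;V)$ is a convex function of the conditional distribution $p(v\mid u)$ for fixed marginal $p(u)$, but here both marginals vary with $i$. The right tool is: $I(U;V\mid i) = \frac1n\sum_i I(g(r_i)-y_i;\tilde y_i - y_i) = \frac1n\sum_i I(g(r_i);\tilde y_i) = \frac1n\ccmi$, and then $I(U;V) \le I(U,i;V) = I(i;V) + I(U;V\mid i)$; since $\tilde y_i - y_i$ has a distribution that may depend on $i$ only through $y_i$, one bounds $I(i;V)$ and combines — but actually for the unconditional version the intended statement is exactly $\cmi \le \ccmi$ via $I(U;V) \le I(U;V\mid i) + I(i; V) $, and if one works with the symmetric noise the term $I(i;\tilde y_i - y_i)$ can be shown to contribute nothing because $\tilde y_i - y_i$ is distributed as a fixed mixture (a point mass at $0$ w.p. $1-\eta'$ and uniform otherwise) independent of $i$. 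So $\cmi = n I(U;V) \le n I(U;V\mid i) = \ccmi$.

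I expect the main obstacle to be the second inequality, specifically getting the dependence on the auxiliary index $i$ exactly right: one must verify carefully that $\tilde y_i - y_i$ (the noise pattern modulo $|\cY|$) has a marginal distribution that does not depend on $i$, so that $I(i;\tilde y_i-y_i)=0$ and the chain-rule bound collapses to the clean statement $\cmi\le\ccmi$. The first inequality is routine (data processing plus independence plus $I\le H$); the subtlety there is only in justifying that the $\tilde y_i$ are independent given the fixed labels $\vy$, which is immediate from the definition of the $\eta$-noisy experiment. I would also double-check that all mutual informations and entropies are finite (which holds since $\cY$ and the range of $g$ on the finite sample are finite), so no measure-theoretic issues arise.
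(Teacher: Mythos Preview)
Your proposal is correct and follows essentially the same route as the paper. For $\ccmi \le \cmdl$ the paper also uses data processing (replacing $g(r_i)$ by the full prediction vector, which is determined by $g$), independence of the noise coordinates to get superadditivity, and the bound $I \le H$; for $\cmi \le \ccmi$ the paper likewise reduces to showing $I(\Delta;N) \le I(\Delta;N\mid i)$, and does so via exactly the observation you isolate---that the noise $\tilde y_i - y_i$ is identically distributed across $i$, so $H(N\mid i)=H(N)$ (equivalently $I(i;N)=0$), after which the chain rule / ``conditioning reduces entropy'' finishes it.
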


\begin{proof}
Fix $T,\vr,\vy,\eta$. We get $\tilde{\vec{y}}$ by choosing i.i.d random variables $N_1,\ldots,N_n$, each equalling $0$ with probability $1-\eta$ and uniform otherwise,
and letting $\tilde{y}_i = y_i + N_i \pmod{|\cY|}$.

We start by proving the second inequality $\ccmi_{\vr,\vy,\eta}(T) \leq H(g)$.
Let $g=T(\vr,\tilde{\vec{y}})$ and define $\vec{p}=(g(r_1),\ldots,g(r_n))$ be the vector of predictions.
Then,

\begin{equation}
\ccmi_{\vr,\vy,\eta}(T) = \sum_i I(p_i; \tilde{y}_i) = \sum_i I(p_i ; N_i)
\label{eq:compbound1}
\end{equation}

with the last equality holding since for fixed $y_i$, $N_i$ determines $\tilde{y}_i$ and vice versa.
However, since the full vector $\vec{p}$ contains only more information than $p_i$, the right-hand side of (\ref{eq:compbound1}) is at most  $\sum_{i=1}^n I(\vec{p};N_i) \leq I(\vec{p}\,;\,N_1,\ldots,N_n)$, using the fact that $N_i$ random variables are independent (see Lemma~\ref{lemma2}).
For a fixed $\vr$, the value of $\vec{p}$ is completely determined by $g$ and hence the entropy of $\vec{p}$ is at most $H(g)$, establishing the second inequality of the theorem.

We now turn to the first inequality $\cmi_{\vr,\vy,\eta}(T) \leq \ccmi_{\vr,\vy,\eta}(T)$.
Let $\Delta_i = p_i - y_i \pmod{|\cY|}$.
Then,

\begin{equation}
\tfrac{1}{n}\ccmi_{\vr,\vy,\eta}(T) = \E_{j\sim [n]} I(p_j;N_j) =  \E_{j\sim [n]} I(\Delta_j;N_j)
\label{eq:compbound2}
\end{equation}

since $p_i$ determines $\Delta_i$ and vice versa (given $y$).
But, since $N_j=N|i=j$  and $\Delta_j=\Delta|i=j$, the right-hand side  of (\ref{eq:compbound2}) equals

\begin{equation}
\E_{j \sim [n]} I(\Delta;N|i=j) = \E_{j\sim [n]} H(N|i=j) - H(N|\Delta,i=j)  \;.
\label{eq:compbound3}
\end{equation}
Since $N_1,\ldots,N_n$ are identically distributed, $H(N|i=j)=H(N)$ which means that the right-hand side of (\ref{eq:compbound3}) equals 
$$
H(N) - \E_{j \sim [n]} H(N|\Delta,i=j) \geq H(N) - H(N|\Delta) = I(\Delta;N) \,
$$
with the inequality holding since on average conditioning reduces entropy.
By definition $I(\Delta;N)=\tfrac{1}{n}\cmi_{\vr,\vy,\eta}(T)$, establishing what we wanted to prove.
\end{proof}

The complexity measures $\ccmi$ and $\cmi$ are defined with respect to a \emph{fixed} train set $(\vr,\vy)$, rendering them applicable for single training sets such as CIFAR-10 and ImageNet that arise in practice.
If $\cD$ is a distribution over $(\vr,\vy)$, then we define the complexity measures $\ccmi$ and $\cmi$ with respect to $\cD$ as the average of the corresponding measure with respect to $(\vr,\vy) \sim \cD$.
We now restate Theorem~\ref{thm:main}:

\begin{theorem}[Theorem~\ref{thm:main}, restated] \label{thm:app-mainrestate}
Let $T= (\Tpre,\Tfit)$ be a training procedure obtained by first training $\Tpre$ on $\vx\in \cX^n$ to obtain a representation $r:\cX \rightarrow \cR$ and then training $\Tfit$ on $(r(\vx), \vy))$ where $\vy\in \cY^n$ to obtain a classifier $g:\cR \rightarrow \cY$.
Then, for every noise parameter $\eta>0$ and distribution $\Dtrain$ over $(\cX,\cY)^n$,
$$
\text{\textcolor{memorization}{Memorization gap}}(T) = \left(\TrainAcc_{\Dtrain,T}(\eta) - \NTrainAcc_{\Dtrain,T}(\eta)\right)_+ \leq \sqrt{\tfrac{\cmi_{\cR,\eta}(\Tfit)}{2n}} \cdot \frac{1}{\eta} 
$$
where $\cR$ is the distribution over $(\cR \times \cY)^n$ induced by $\Tpre$ on $\Dtrain$.
\end{theorem}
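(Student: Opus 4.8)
The plan is to reduce the memorization gap to a single per-coordinate mutual-information quantity and then invoke Pinsker's inequality. First I would fix a draw $(\vx,\vy)\sim\Dtrain$, set $\vr=\Tpre(\vx)$, and work inside the $\eta$-noisy experiment, writing $N_i=\tilde y_i-y_i \bmod |\cY|$ for the i.i.d.\ per-coordinate noise, $g=\Tfit(\vr,\tilde\vy)$, $p_i=g(r_i)$, and $\Delta_i=p_i-y_i \bmod |\cY|$. The point of these deviation variables is that $\tilde f(x_i)=y_i \iff p_i=y_i \iff \Delta_i=0$, and $\tilde y_i\neq y_i \iff N_i\neq 0$. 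I then draw a uniform index $i\sim[n]$ and treat every probability below as taken over the pair (noisy experiment, $i$), so that $\TrainAcc(\eta)=\Pr[\Delta_i=0]$ and $\NTrainAcc(\eta)=\Pr[\Delta_i=0 \mid N_i\neq 0]$; here I would pin down the reading of the last row of Table~\ref{tab:acc} as this conditional probability over the joint space, so that the identities below are exact rather than merely asymptotic in $n$.

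Next I would record the elementary identity, obtained from the law of total probability,
\[
\TrainAcc(\eta)-\NTrainAcc(\eta)=\Pr[N_i=0]\cdot\bigl(\Pr[\Delta_i=0\mid N_i=0]-\Pr[\Delta_i=0\mid N_i\neq 0]\bigr),
\]
and observe that the bracketed term is the difference in the probability of the single event $\{\Delta_i=0\}$ under two distributions, hence at most their total-variation distance $\delta:=\lVert P_{\Delta_i\mid N_i=0}-P_{\Delta_i\mid N_i\neq 0}\rVert_{\mathrm{TV}}$. Writing $\beta:=\Pr[N_i=0]$, this already yields $(\TrainAcc(\eta)-\NTrainAcc(\eta))_+\le\beta\delta$.

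The heart of the argument is to bound $\delta$ by $I(\Delta_i;N_i)$, where $i$ is part of the sample space so that by definition $I(\Delta_i;N_i)=\tfrac1n\,\cmi_{\vr,\vy,\eta}(\Tfit)$. Starting from $I(\Delta_i;N_i)=\sum_a \Pr[N_i=a]\,\KL\bigl(P_{\Delta_i\mid N_i=a}\,\|\,P_{\Delta_i}\bigr)$, I would merge the $a\neq 0$ terms using convexity of $\KL(\cdot\|P_{\Delta_i})$ to get $I(\Delta_i;N_i)\ge \beta\,\KL(P_{\Delta_i\mid N_i=0}\|P_{\Delta_i})+(1-\beta)\,\KL(P_{\Delta_i\mid N_i\neq 0}\|P_{\Delta_i})$, apply Pinsker to each term, and use $P_{\Delta_i}=\beta P_{\Delta_i\mid N_i=0}+(1-\beta)P_{\Delta_i\mid N_i\neq 0}$ (which puts the two conditionals at TV-distances $(1-\beta)\delta$ and $\beta\delta$ from $P_{\Delta_i}$) to conclude $I(\Delta_i;N_i)\ge 2\beta(1-\beta)\delta^2$. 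Combining with the previous step,
\[
\bigl(\TrainAcc(\eta)-\NTrainAcc(\eta)\bigr)_+\le\beta\delta\le\sqrt{\tfrac{\beta}{2(1-\beta)}\,I(\Delta_i;N_i)}\le\tfrac1\eta\sqrt{\tfrac{\cmi_{\vr,\vy,\eta}(\Tfit)}{2n}},
\]
the last step using $\beta\le 1$ together with $1-\beta=\Pr[N_i\neq 0]$ being of order $\eta$ (the precise constant depends on the noise convention, but it always leaves the stated $1/\eta$ with room to spare).

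Finally I would average over $(\vx,\vy)\sim\Dtrain$: since the memorization gap is $\bigl(\E_{\Dtrain}[\TrainAcc(\eta)]-\E_{\Dtrain}[\NTrainAcc(\eta)]\bigr)_+$, the inequality $(\E[\cdot])_+\le\E[(\cdot)_+]$ followed by Jensen's inequality $\E\sqrt{\cdot}\le\sqrt{\E[\cdot]}$ upgrades the per-instance bound to $\tfrac1\eta\sqrt{\E_{(\vx,\vy)\sim\Dtrain}\cmi_{\Tpre(\vx),\vy,\eta}(\Tfit)/(2n)}=\tfrac1\eta\sqrt{\cmi_{\cR,\eta}(\Tfit)/(2n)}$, which is the claimed bound and matches the expression in~\eqref{eq:empiricalcomplexity}. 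I expect the main obstacle to be the information-theoretic step: obtaining the clean lower bound $I(\Delta_i;N_i)\ge 2\beta(1-\beta)\delta^2$ while correctly folding together the many non-zero values of $N_i$ and tracking the $\eta$-dependence of $\beta$, together with the bookkeeping needed to make the reduction to a single coordinate — and the precise meaning of $\NTrainAcc(\eta)$ as a conditional probability — fully rigorous. Note that the chain $\cmi\le\ccmi\le\cmdl$ of Theorem~\ref{thm:app-relations} is not needed for this statement; it serves only to re-express the bound in terms of the description length of $g$.
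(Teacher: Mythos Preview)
Your proposal is correct and follows the same overall arc as the paper's proof: fix the training set, express the memorization gap as $\Pr[\Delta=0]-\Pr[\Delta=0\mid N\neq 0]$ with a uniformly random index folded into the sample space, bound this via Pinsker in terms of $I(\Delta;N)=\tfrac1n\cmi_{\vr,\vy,\eta}(\Tfit)$, and finish by Jensen over $(\vx,\vy)\sim\Dtrain$. The only difference is in how the Pinsker step is executed. The paper first collapses to the Bernoulli pair $Z=\characteristic_{\Delta=0}$, $B=\characteristic_{N\neq 0}$ via the data-processing inequality $I(Z;B)\le I(\Delta;N)$ and then invokes the one-line Lemma~\ref{lem:bernoullirv}, $|\E[Z\mid B{=}1]-\E[Z]|\le\sqrt{I(Z;B)/2}/\E[B]$, which is Pinsker applied to the joint of $(Z,B)$ versus the product. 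You instead keep $\Delta$ and $N$ multi-valued, use convexity of $\KL(\cdot\,\|\,P_\Delta)$ to merge the conditionals $\{P_{\Delta\mid N=a}\}_{a\neq 0}$ into $P_{\Delta\mid N\neq 0}$, and apply Pinsker to each of the two resulting terms; this yields the slightly sharper intermediate inequality $I(\Delta;N)\ge 2\beta(1-\beta)\delta^2$ (tighter than what the Bernoulli route gives by a factor $\sqrt{\beta(1-\beta)}$) but lands on the same final bound. Your remark that the chain $\cmi\le\ccmi\le\cmdl$ of Theorem~\ref{thm:app-relations} is not needed for this restated version is also correct.
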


Note that the bound on the right-hand side is expressed only in terms of the complexity of the second stage $\Tfit$ and is independent of the complexity of $\Tpre$. The crux of the proof is showing (close to) independence between the corrupted indices and prediction deviation of $g$ resulting from the noise.

\begin{proof}
Let $(\vr,\vy)$ be sampled by first drawing $(\vx,\vy) \sim \Dtrain$ over $(\cX\times\cY)^n$ then applying $\vr=r(\vx)$ where $r=\Tpre(\vx)$.
Consider the sample space of sampling $\tilde{\vy}$ according to the $\eta$-noisy distribution with respect to $Y$, computing $g=\Tfit(\vr,\tilde{\vy})$,  and sampling $i \sim [n]$.
We define the following two Bernoulli random variables over this sample space:

$$Z = \characteristic_{\Delta = 0} = \begin{cases}
1 & g(R_i)=y_i\\
0 & otherwise
\end{cases}; \quad \quad B = \characteristic_{N\ne0} = \begin{cases}
1 & \tilde{y}_i \ne y_i\\
0 & otherwise
\end{cases}.$$

For a given $\vr,\vy$, since $Z$ is determined by $\Delta$ and $B$ is determined by $N$,  $I(Z;B) \leq I(\Delta; N)= \cmi_{\vr,\vy,\eta}(\Tfit)/n$.
By Lemma~\ref{lem:bernoullirv}, for every  Bernoulli random variables $B,Z$,

$$\left| \E[Z] - \E[Z | B=1] \right|  \leq \sqrt{\tfrac{1}{2}I(Z;B)}/ \E[B]$$

And hence in our case (since $\E[B]=\eta$),

$$\E[Z] - \E[Z|B=1] \leq \sqrt{\tfrac{\cmi_{\vr,\vy,\eta}(\Tfit)}{2n}} \cdot \tfrac{1}{\eta}\;.$$

But $\E[Z]$ corresponds to the probability that $g(r)=y$ for $(r,y)$ in the train set, while $\E[Z|B=1]$ corresponds to this probability over the noisy samples.
Hence the memorization gap is bounded  by

$$
\E_{(\vr,\vy) \sim \cR} \left[\sqrt{\tfrac{\cmi_{\vr,\vy,\eta}(\Tfit)}{2n}} \cdot \tfrac{1}{\eta} \right] \leq \tfrac{1}{\eta} \sqrt{\E_{(\vr,\vy) \sim \cR} \left[ \tfrac{\cmi_{\vr,\vy,\eta}(\Tfit)}{2n} \right]}  = 
\sqrt{\tfrac{\cmi_{\cR,\eta}(\Tfit)}{2n}} \cdot \frac{1}{\eta} 
$$
using the Jensen inequality and the concavity of square root for the first inequality.
\end{proof}

\section{The three gaps} \label{sec:three-gaps}

We now briefly describe what is known and what we prove about the three components of the RRM bound. We provide some additional discussions in Appendix~\ref{app:examples}, including ``counter-examples'' of algorithms that exhibit large values for each one of these gaps.

 \begin{figure}[ht]
     \centering
     \begin{subfigure}[b]{0.32\textwidth}
         \centering
         \includegraphics[width=\textwidth]{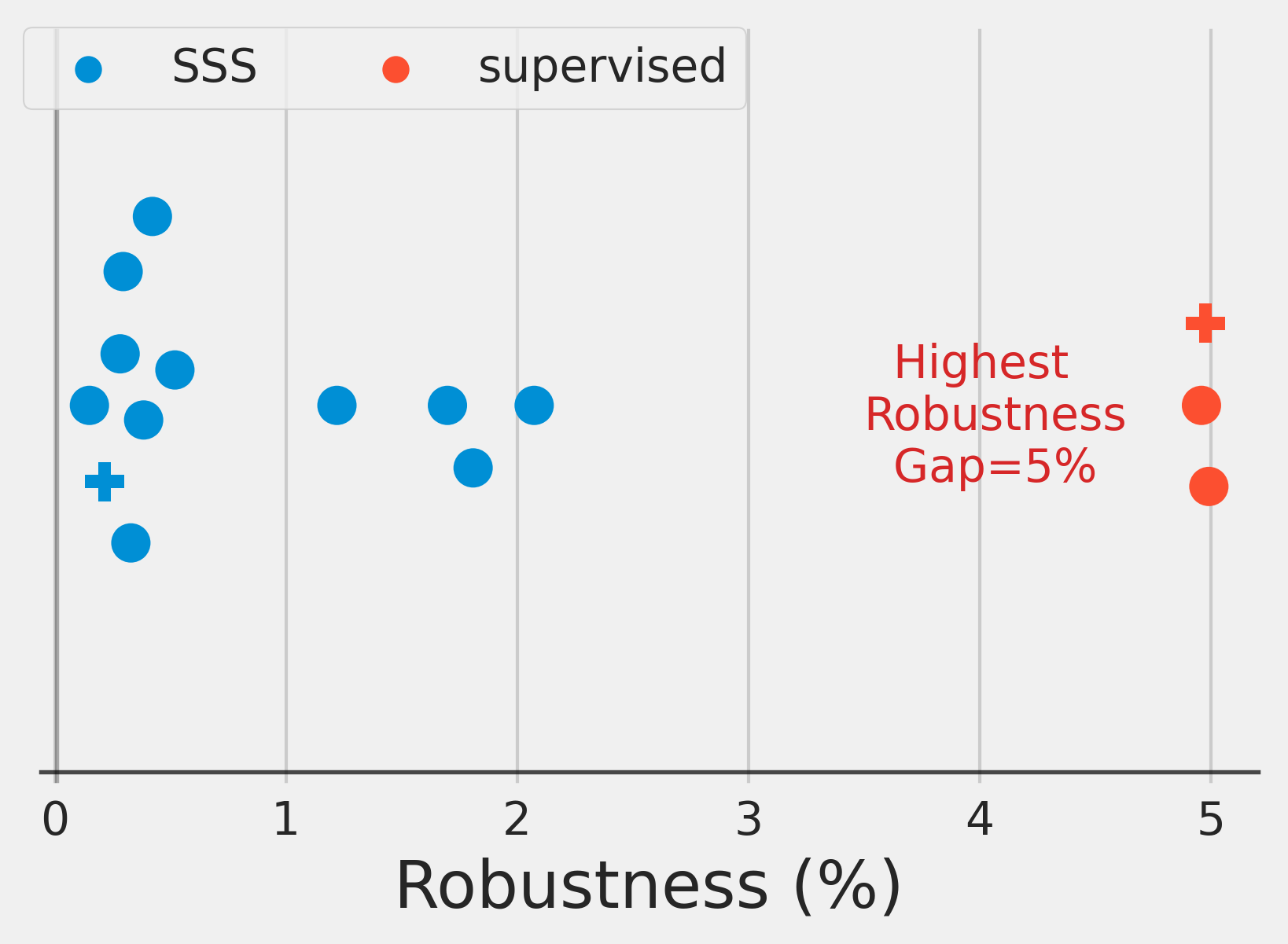}
     \end{subfigure}
     \begin{subfigure}[b]{0.32\textwidth}
         
         \includegraphics[width=\textwidth]{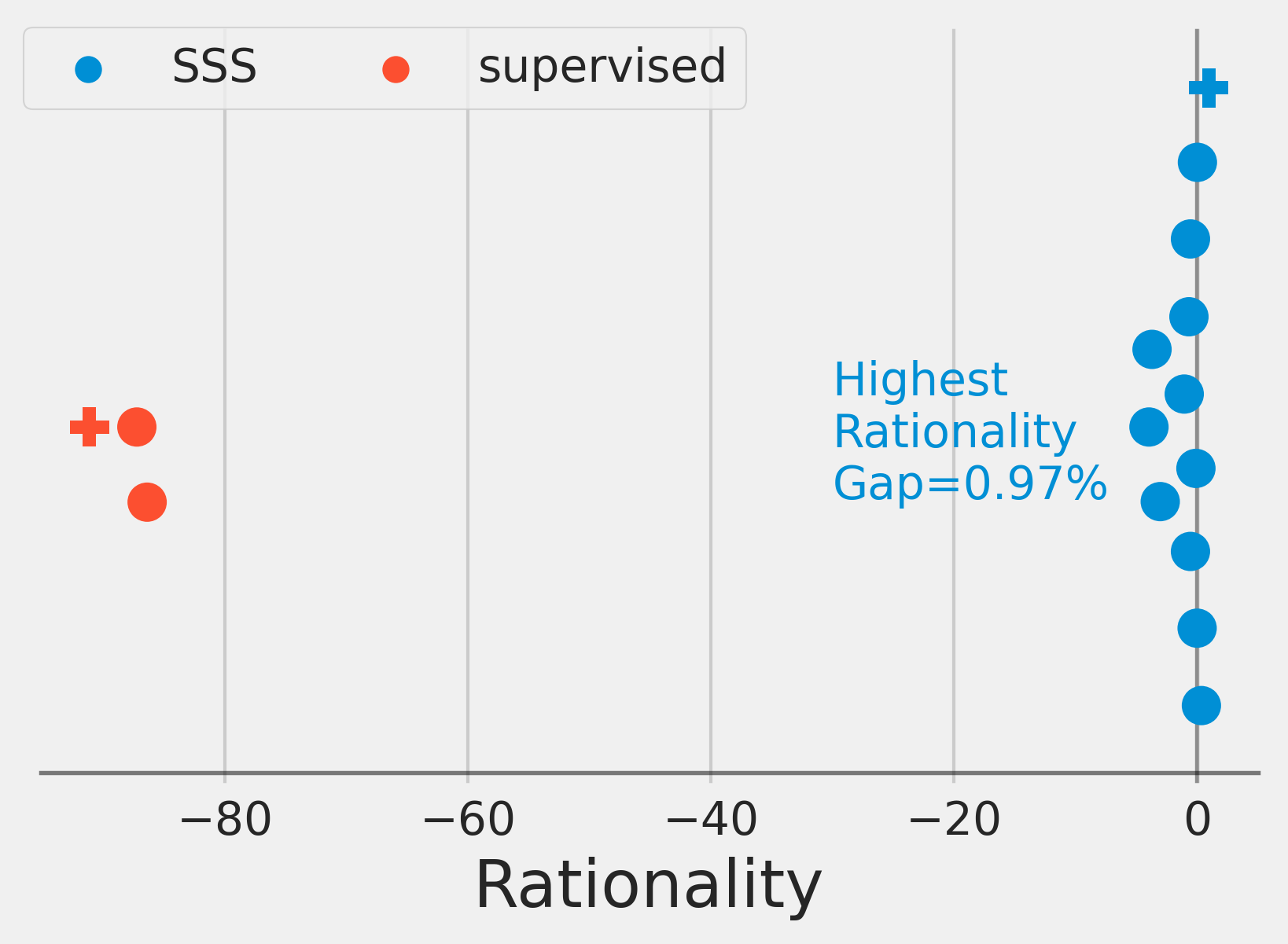}
     \end{subfigure}     
     \begin{subfigure}[b]{0.32\textwidth}
         
         \includegraphics[width=\textwidth]{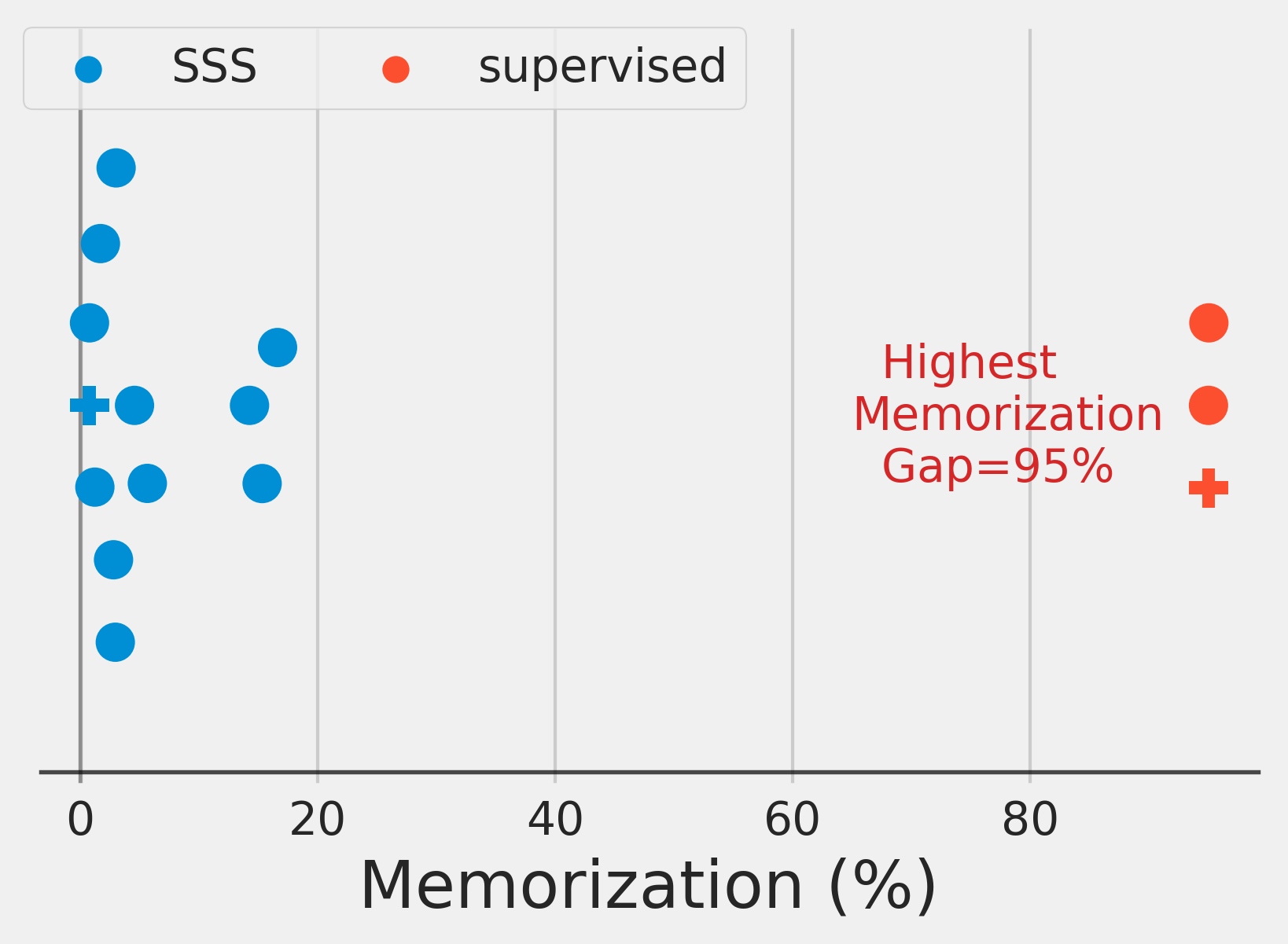}
     \end{subfigure}
        \caption{{\bf Robustness, Rationality, and Memorization for CIFAR-10.} Each blue point is a different combination of (architecture + self-supervised task + fitting algorithm). Each red point is a different architecture trained end-to-end with supervision. We use the `$+$' marker to denote the two best models of each type (SSS and supervised). No augmentations were added. Noise is $5\%$. 
        Details in \cref{app:fig-details}}
        \label{fig:threegaps}
\end{figure}

\paragraph{The robustness gap.} 
The robustness gap measures the decrease in training accuracy from adding $\eta$ noisy labels, measured with respect to the clean labels. The robustness gap and related notions such as   \emph{noise stability or tolerance} have been studied in various works (cf. \citet{frenay2013classification, manwani2013noise}). \emph{Interpolating classifiers} (with zero train error) satisfy $\TrainAcc(\eta) \geq 1-\eta$ and hence their robustness gap is at most  $\eta$ (see left panel of Figure~\ref{fig:threegaps}).
In SSS algorithms, since the representation is learned without using labels, the injection of label noise only affects the simple classifier, which is often \emph{linear}. Robustness guarantees for linear classifiers have been given previously by \citet{rudin2005stability}.
While proving robustness bounds is not the focus of this paper, we note in the appendix some simple bounds for least-squares minimization of linear classifiers and the (potentially inefficient) Empirical Risk Minimization algorithm (see  Appendices~\ref{sec:LSE} and~\ref{sec:ERMrobust}). 
Empirically, we observe that the robustness gap of SSS algorithms is often significantly smaller than $\eta$. (See left panels of Figure~\ref{fig:threegaps} and \cref{fig:exp-threegaps}.)

\paragraph{The rationality gap.} 
To build intuition for the rationality gap, consider the case where the inputs $x$ are images, and the label $y$ is either ``cat'' or ``dog''. A positive rationality gap means that giving the incorrect label ``dog'' for a cat image $x$ makes the output classifier \emph{more likely} to classify $x$ as a cat compared to the case where it is not given any label for $x$ at all.
Hence intuitively, a positive rationality gap corresponds to the training procedure being ``irrational'' or ``inconsistent''---wrong information should be only worse than no information, and we would expect the rationality gap to be zero or close to it.
Indeed, the rationality gap is always zero for \emph{interpolating classifiers} that fit the training data perfectly.
Moreover, empirically the rationality gap is often small for SSS algorithms, particularly for the better-performing ones.
(See middle panels of Figure~\ref{fig:threegaps} and \cref{fig:exp-threegaps}.)

We also show that positive  rationality gap corresponds to ``leaving performance on the table'' by proving the following theorem (see Section~\ref{sec:perfontable} for a formal statement and proof):

\begin{theorem}[Performance on the table theorem, informal] \label{thm:perfontable}
For every training procedure $T$ and distribution $\Dtest$, $\Dtrain=\Dtest^n$, there exists a training procedure $S$ satisfying $\TestAcc_S \geq \TestAcc_T + \text{\textcolor{rationality}{rationality gap}}(T) - o (1)$.
\end{theorem}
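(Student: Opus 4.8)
The plan is to construct $S$ explicitly from $T$ by exploiting the fact that, in the $\eta$-noisy experiment, a fresh test point is statistically indistinguishable from a training point whose label has been corrupted to a uniformly random value — provided $S$ does not get to see that corrupted label. Concretely, on input a clean training set $(\vx,\vy)$ of size $n$ and a test point $x$, the procedure $S$ appends $x$ to the training set as an $(n+1)$-st example, assigns it a \emph{uniformly random} label $\tilde y_{n+1}$, runs $T$ on this augmented noisy-ish training set of size $n+1$, and outputs the resulting classifier's prediction on $x$. (One should run $T$ with its noise parameter set so that the planted example looks exactly like a noise-corrupted coordinate; since $\Dtrain = \Dtest^n$, the first $n$ coordinates are already distributed correctly, and one can also independently re-noise a small fraction of them if the cleanest matching of distributions requires it.) The key point is that from $T$'s perspective, the input it receives in this construction has \emph{exactly} the same distribution as the input in the $\eta$-noisy experiment conditioned on coordinate $n+1$ being one of the corrupted ones — so $T$'s accuracy on $x$ (measured against the true label $y$, which $S$ knows but did not feed in) is exactly $\NTrainAcc_T(\eta)$ in the limit of the appropriate parameters.

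The steps, in order: (1) Define $S$ as above and fix the noise/size bookkeeping so that the distribution of the size-$(n+1)$ noisy training set fed to $T$ matches the $\eta$-noisy experiment for $T$ at sample size $n+1$; this is where the $o(1)$ terms enter, accounting for the discrepancy between running $T$ at size $n$ versus $n+1$ and for any mismatch in the empirical noise rate. (2) Argue that $\TestAcc_S = \Pr[\tilde f(x_{n+1}) = y_{n+1} \mid \tilde y_{n+1} \neq y_{n+1}] = \NTrainAcc_T(\eta) \pm o(1)$, directly from the definition of $\NTrainAcc$ in Table~\ref{tab:acc} and the indistinguishability of the planted coordinate. (3) Combine with the definition of the rationality gap, $\text{rationality gap}(T) = (\NTrainAcc_T(\eta) - \TestAcc_T)_+$, to conclude $\TestAcc_S \geq \TestAcc_T + \text{rationality gap}(T) - o(1)$ (the $(\cdot)_+$ is harmless: if the rationality gap is $0$ the bound is trivially true since we can just take $S = T$). (4) In the formal version, state precisely what ``$o(1)$'' is as a function of $n$ and $\eta$, and note continuity of $T$'s behavior in sample size is not actually needed if one instead defines $S$ to subsample $n-1$ of its own training points before appending $x$, keeping the size exactly $n$.

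The main obstacle I expect is step (1): making the reduction distributionally exact rather than merely heuristic. One has to be careful that (a) planting $x$ with a uniform label and also having some of the original $n$ coordinates be genuinely noise-free does not by itself match the $\eta$-noisy distribution, in which \emph{each} coordinate is independently corrupted with probability $\eta$; the clean fix is to have $S$ itself re-noise each of its $n$ input labels independently with probability $\eta$ before running $T$, so that after appending $x$ with a uniform label, all $n+1$ coordinates are i.i.d. $\eta$-noisy — and then condition on the event that coordinate $n+1$ landed in the corrupted set, which happens with probability $\eta$ but can be \emph{forced} since $S$ controls that coordinate. (b) The subtlety that $T$ expects sample size $n$, not $n+1$: absorb this into the $o(1)$ via a standard leave-one-in / subsampling argument, or simply phrase the theorem for a procedure $S$ that works at size $n$ by dropping one original sample. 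Everything else is bookkeeping with the definitions in Table~\ref{tab:acc}.
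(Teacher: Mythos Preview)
Your construction is essentially the paper's: plant the test point into the training set with a uniformly random label, run $T$ on the resulting set, and output its prediction on the planted point. The paper \emph{replaces} a uniformly chosen coordinate rather than appending (exactly your subsampling fix in step (4)), and your explicit re-noising of the original $n$ labels is in fact cleaner than the paper's somewhat informal treatment of the same point.

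There is, however, one genuine gap in your step~(2). You claim $\TestAcc_S = \Pr[\tilde f(x_{n+1}) = y_{n+1} \mid \tilde y_{n+1} \neq y_{n+1}]$, but $S$ \emph{cannot} condition on $\tilde y_{n+1}\neq y_{n+1}$: it does not know the true label $y_{n+1}$ (that is the whole point of a test sample), so it can only force $\tilde y_{n+1}$ to be \emph{uniform}, not to be \emph{wrong}. With probability $1/|\cY|$ the planted uniform label coincides with $y_{n+1}$, and in that event the planted coordinate is indistinguishable to $T$ from a clean coordinate; $T$'s accuracy on it is then the clean-sample accuracy in the noisy experiment, which need not equal $\NTrainAcc_T(\eta)$. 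Your sentence ``can be \emph{forced} since $S$ controls that coordinate'' conflates ``in the noise set'' with ``actually corrupted.''

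The paper's formal restatement (Theorem~\ref{thm:app-perfontable}) handles exactly this by adding the assumption $\TrainAcc_{T}(\eta)\ge \NTrainAcc_{T}(\eta)$ (i.e., nonnegative memorization gap). Under that assumption the clean-sample accuracy in the noisy experiment is at least $\NTrainAcc_T(\eta)$, so the $1/|\cY|$ event only helps and you get $\TestAcc_S \ge \NTrainAcc_T(\eta)-o(1)$ as desired. You should either add this assumption explicitly or carry an additive $O(1/|\cY|)$ error term through the bound.
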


One interpretation of Theorem~\ref{thm:perfontable} is that we can always reduce the generalization gap to  $\text{\textcolor{robustness}{robustness}}+\text{\textcolor{memorization}{memorization}}$ if we are willing to move from the procedure $T$ to $S$. In essence, if the rationality gap is positive, we could include the test sample in the train set \emph{with a random label} to increase the test performance.
However, this transformation comes at a high computational cost; inference for the classifier produced by $S$ is as expensive as retraining from scratch.
Hence, we view Theorem~\ref{thm:perfontable} more as a ``proof of concept'' than as a practical approach for improving performance.

\begin{remark}[Why rationality?]\label{rem:rationality} Since SSS algorithms use a simple classifier (e.g., linear), the reader may wonder why we cannot directly prove bounds on the generalization gap.
The issue is that the representation used by SSS algorithms is still sufficiently over-parameterized to allow memorizing the training set samples. As a pedagogical example, consider a representation-learning procedure that maps a label-free training set $\vx$ to a representation $r:\cX \rightarrow\cR$ that has high quality, in the sense that the underlying classes become linearly separable in the representation space.
Moreover, suppose that the representation space has dimension much smaller than $n$, and hence a linear classifier would not be able to fit noise, meaning the resulting procedure will have a small memorization gap and small empirical Rademacher complexity.
Without access to the labels, we can transform $r$ to a representation $r'$ that on input $x$ will output $r(x)$ if $x$ is in the training set, and output the all-zero vector (or some other trivial value) otherwise.
Given sufficiently many parameters, the representation $r'$ (or a close-enough approximation) can be implemented by a neural network.
Since $r$ and $r'$ are identical on the training set, the procedure using $r'$ will have the same train accuracy, memorization gap, and empirical Rademacher complexity.
However, using  $r'$, one cannot achieve better than trivial accuracy on unseen test examples. This does not contradict the RRM bound since this algorithm will be highly irrational.
\end{remark}

\paragraph{The memorization gap.} 
The \emph{memorization gap} corresponds to the algorithm's ability to fit the noise (i.e., the gap increases with the number of fit noisy labels).
If, for example, the classifier output is \emph{interpolating}, i.e., it satisfies $f(x_i)=\tilde{y}_i$ for every $i$, then accuracy over the noisy samples will be $0$ (since for them $y_i \neq \tilde{y}_i$). In contrast, the overall accuracy will be in expectation at least $1-\eta$ which means that the memorization gap will be $\approx 1$ for small $\eta$.
However, we show empirically (see right panels of Figures~\ref{fig:threegaps}~and~\ref{fig:exp-threegaps}) that the memorization gap is small for many SSS algorithms and \emph{prove} a bound on it in Theorem~\ref{thm:main}.
When combined with small rationality and robustness, this bound results in non-vacuous generalization bounds for various real settings (e.g., 48\% for ResNet101 with SimCLRv2 on ImageNet, and as low as 4\% for MoCo V2 with ResNet-18 on CIFAR-10).
Moreover, unlike other generalization bounds, our bound decreases with \emph{data augmentation} (see \cref{fig:aug-cf10}).

\begin{remark}[Memorization vs. Rademacher] \label{rem:radamacher}
The memorization gap, as well the complexity measures defined in Section~\ref{sec:theory} have a superficial similarity to \emph{Rademacher complexity} \citep{rademacher}, in the sense that they quantify the ability of the output classifier to fit noise.
One difference is that Rademacher complexity is defined with respect to $100$\% noise, while we consider the $\eta$-noisy experiment for small $\eta$.
A more fundamental difference is that Rademacher complexity is defined via a supremum over all classifiers in some class. In contrast, our measures are defined with respect to a particular training algorithm.
As mentioned, \cite{ZhangBHRV17} showed that modern end-to-end supervised learning algorithm can fit 100\% of their label noise.
This is \emph{not the case} for SSS algorithms, which can only fit 15\%-25\% of the CIFAR-10 training set when the labels are completely random (see Table~\ref{table:zhang} in the appendix). 
However, by itself, the inability of an algorithm to fit random noise does not imply that the Rademacher complexity is small, and does not imply a small generalization gap.
Indeed, the example of Remark~\ref{rem:rationality} yields an SSS method with both small memorization gap and empirical Rademacher complexity, and yet has a large generalization gap.
\end{remark}

\section{Empirical study of the RRM bound}
\label{sec:experiments}

In support of our theoretical results, we conduct an extensive empirical study of the three gaps and empirically evaluate the theoretical bound on the memorization gap (from Equation~(\ref{eq:empiricalcomplexity}) ) for a variety of SSS algorithms for the CIFAR-10 and ImageNet datasets. We provide a summary of our setup and findings below. For a full description of the algorithms and hyperparameters, see Appendix~\ref{app:exp-methods}.

 \begin{figure}[ht]
     \centering
     \includegraphics[width=\textwidth]{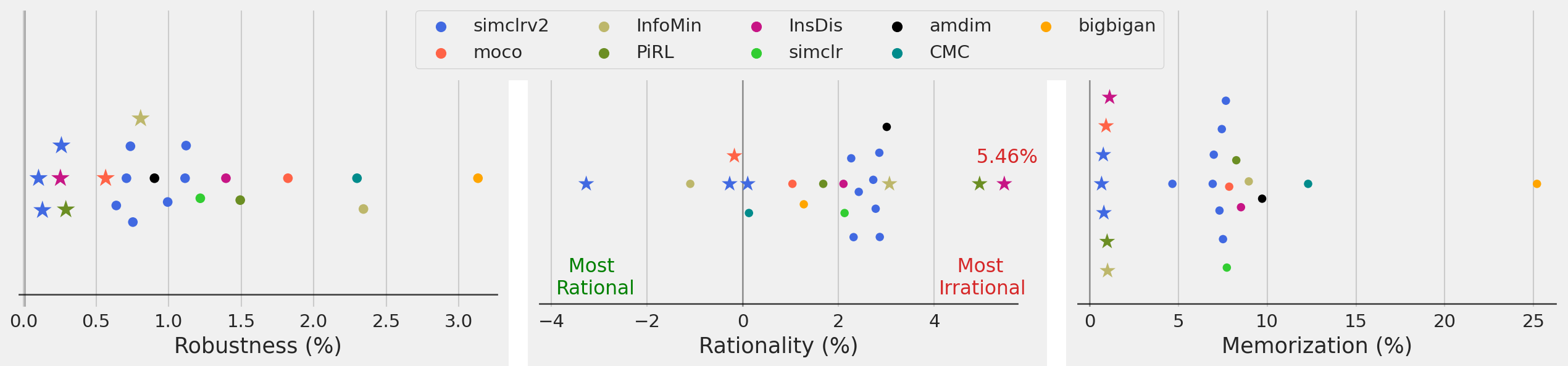}
     \caption{{\bf Robustness, Rationality and Memorization for ImageNet.} Each point represents a different combination of self-supervised learning algorithm (e.g., SimCLR), backbone architecture (e.g., ResNet-50) and simple classifier (e.g., linear classification). Star indicates experiments with 10 augmentations per training sample. Noise level is $\eta = 5\%$. Full experimental details in Section \ref{app:exp-methods}.}
        \label{fig:exp-threegaps}
\end{figure}

{\bf SSS Algorithms (}$\Tpre, \Tfit${\bf).} For the first phase of training $\Tpre$, we consider various self-supervised training algorithms that learn a representation \emph{without} explicit training labels. There are two main types of representation learning methods (1) \emph{Contrastive Learning}, which finds an embedding by pushing `'similar" samples closer, and (2) \emph{Pre-text tasks}, which hand craft a supervised task that is independent of downstream tasks, such as prediction the rotation angle of a given image \citep{rotation}. Our analysis is independent of the type of representation learning method, and we focus on methods that achieve high test accuracy when combined with the simple test phase. 
The list of methods included in our study is Instance Discrimination \citep{insdis}, MoCoV2 \citep{moco}, SimCLR \citep{simclrv1, simclrv2}, AMDIM \citep{amdim}, CMC \citep{cmc}, InfoMin \citep{infomin} as well as adversarial methods such as BigBiGAN \citep{bigbigan}. 

For the second phase of training (also known as the evaluation phase \citep{rep_eval_bench}), we consider simple models such as regularized linear regression, or small Multi-Layer Perceptrons (MLPs).
For each evaluation method, we run two experiments: 1) the clean experiment where we train $\Tfit$ on the data and labels $(\vx, \vy)$; 2) the $\eta$-noisy experiment where we train $\Tfit$ on $(\vx, \tilde\vy)$ where $\tilde\vy$ are the $\eta$ noised labels. Unless specified otherwise we set the noise to $\eta=5\%$.

{\bf Adding augmentations.} We investigate the effect of data augmentation on the three gaps and the theoretical bound. For each training point, we sample $t$ random augmentations ($t = 10$ unless stated otherwise) and add it to the train set. Note that in the noisy experiment two augmented samples of the same original point might be assigned with different labels. We use the same augmentation used in the corresponding self-supervised training phase. 

\begin{figure}[t]
\centering
\begin{minipage}{.5\textwidth}
\vspace{-0.3cm}
  \centering
  \includegraphics[width=\textwidth]{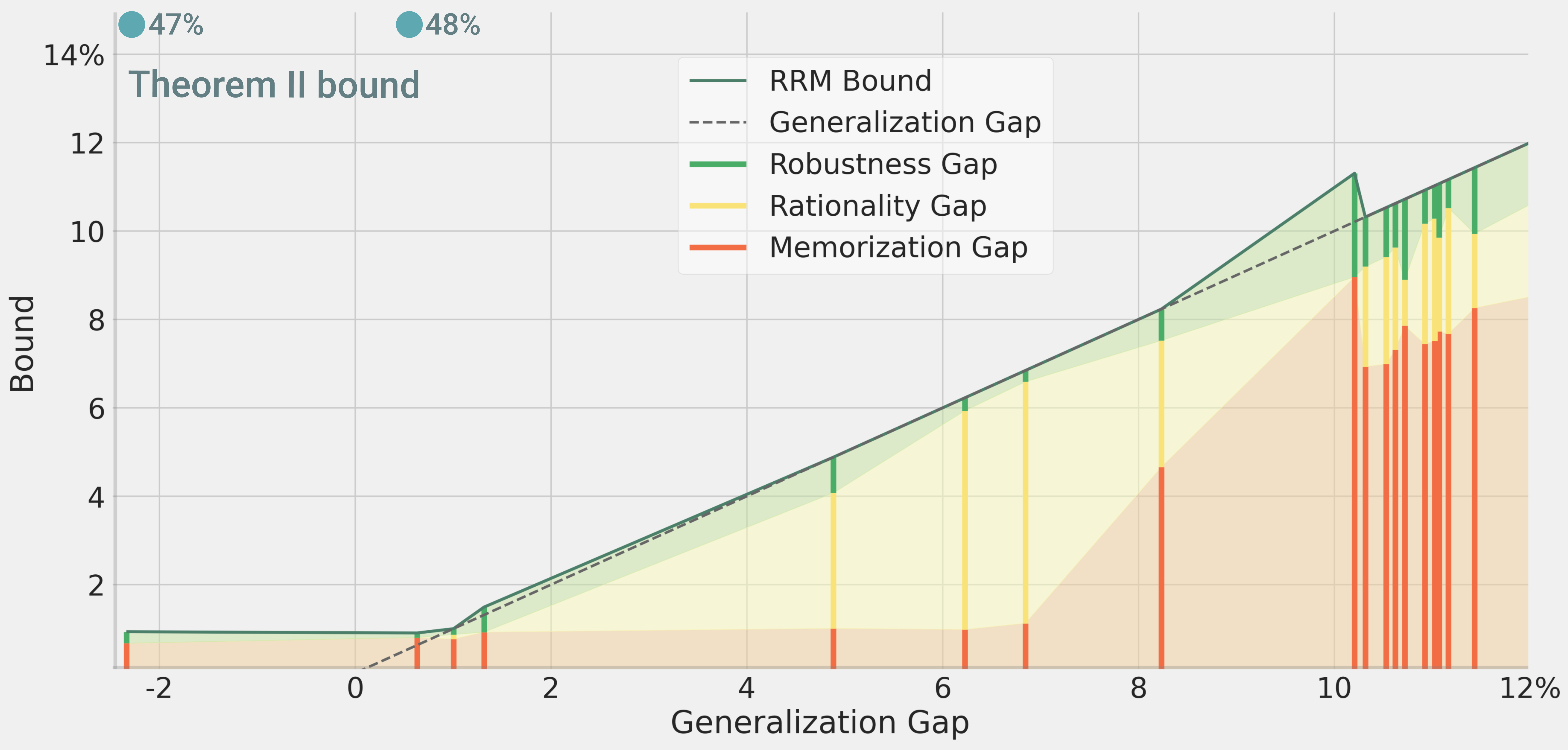}
  \caption{The RRM bound of SSS methods on ImageNet, with models sorted by the generalization gap. We plot the robustness, rationality and memorization gaps. Similar to  \cref{fig:intro}, for most models, the bound is tight and  is dominated by the memorization gap. Theorem~\ref{thm:main} bound is marked for the two leftmost models (we did not evaluate it for the others, for computational reasons). } \label{fig:IN-MAIN}
\end{minipage} %
\hspace{0.2cm}
\begin{minipage}{.45\textwidth}
  \centering
  \vspace{-0.6cm}
  \includegraphics[width=0.83\textwidth]{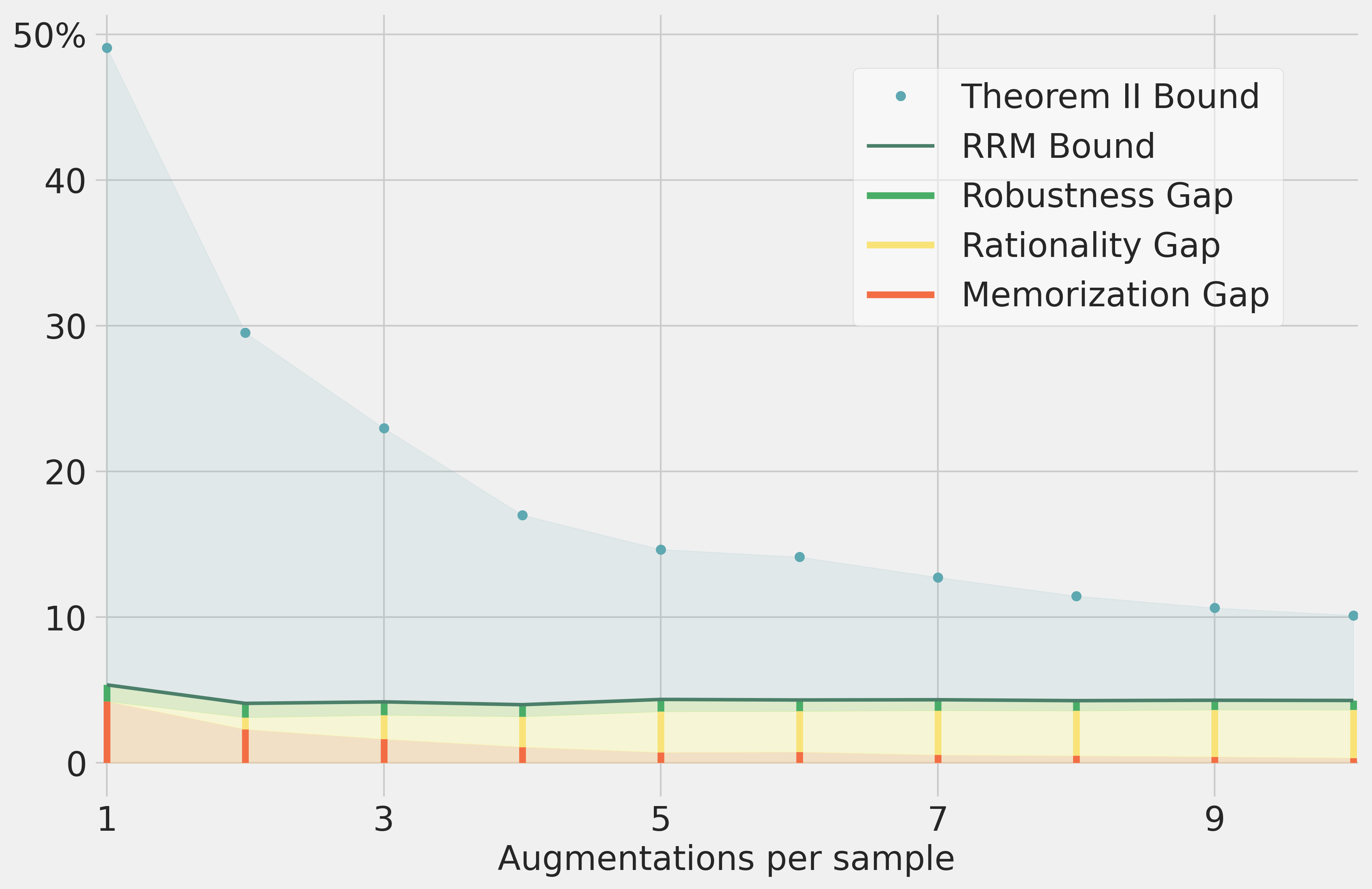}
 \caption{Empirical RRM for the AMDIM SSS model on CIFAR-10 with increasing  number of augmentations. While robustness and memorization gaps decrease, and so does our generalization bound, the rationality gap increases since $\cD_{\text{train}}$ and $\cD_{\text{test}}$ grow apart.}
 \label{fig:aug-cf10}
\end{minipage}
\end{figure}

{\bf Results.} Figures~\ref{fig:intro} and~\ref{fig:threegaps} provide a summary of our experimental results for CIFAR-10. The robustness and rationality gaps are close to zero for most SSS algorithms, while the memorization gap is usually the dominant term, especially so for models with larger generalization gap. Moreover, we see that $\cmi$ often produces a reasonably tight bound for the memorization gap, leading to a generalization bound that can be as low as $5$-$10\%$. In Figures~\ref{fig:exp-threegaps} and~\ref{fig:IN-MAIN} we give a summary of our experimental results for SSS algorithms on ImageNet. Again, the rationality and robustness gaps are bounded by small constants. Notice, that adding augmentations reduces memorization, but may lead to an increase in the rationality gap. This is also demonstrated in Figure~\ref{fig:aug-cf10} where we vary the number of data augmentations systematically for one SSS algorithm (AMDIM) on CIFAR-10. Since computing the Theorem II bound for ImageNet is computationally expensive we compute it only for two algorithms, which achieve non-vacuous bounds between ~$47$-$48\%$, with room for improvement (See Appendix \ref{app:convergence}.)

\section{Positive rationality gap leaves room for improvement} \label{sec:perfontable}

We now prove the ``performance on the table theorem'' that states that we can always transform a training procedure with a positive rationality gap into a training procedure with better performance:

\begin{theorem}[Performance on the table theorem, restated] \label{thm:app-perfontable}
For every training procedure $T$ and $\Dtest,n,\eta$, if $\Dtrain =  \Dtest^n$  and $T$ has a positive rationality gap with respect to these parameters, then there exists a training procedure $S$ such that,
\begin{equation}
\TestAcc_{S,\cD} \geq \NTrainAcc_{T,\cD}(\eta) -o(1) =  \TestAcc_{T,\cD} + \text{\textcolor{rationality}{rationality-gap}}(T) - o(1) \label{eq:perfontable}
\end{equation} 
where $o(1)$ is a term that vanishes with $n$, and assuming that $\TrainAcc_{T,\cD}(\eta)  \geq \NTrainAcc_{T,\cD}(\eta)$.
\end{theorem}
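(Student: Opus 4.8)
The plan is to construct $S$ explicitly from $T$ by a "plant the test point with a random label" trick, and then argue that the test accuracy of $S$ equals (up to lower-order terms) the noisy-train-on-corrupted-samples accuracy $\NTrainAcc_{T,\cD}(\eta)$ of $T$. First I would describe $S$: on input a training set $(\vx,\vy)$ of size $n$ drawn from $\Dtrain = \Dtest^n$ and a fresh test point $x$ to be classified, $S$ picks a uniformly random index $j \in [n]$, replaces $(x_j,y_j)$ by $(x, \tilde y)$ where $\tilde y$ is uniform over $\cY$ (a freshly corrupted label for the planted point), runs $T$ on this modified size-$n$ set to get a classifier $\tilde f$, and outputs $\tilde f(x)$. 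The key observation is a symmetry/exchangeability argument: since $\Dtrain = \Dtest^n$, the $n$ points are i.i.d., so inserting $x$ at a random position $j$ and corrupting its label produces exactly the same distribution on $(\text{modified set}, \text{index of the planted corrupted point})$ as the distribution that arises in $T$'s own $\eta$-noisy experiment, \emph{conditioned on the event $B$ that the point at a random index got a corrupted label}. Under that coupling, $S$'s output on $x$ has the same distribution as $T$'s prediction on a corrupted noisy training sample, evaluated against that sample's true label — which is precisely what $\NTrainAcc_{T,\cD}(\eta)$ measures.

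The steps, in order: (1) define $S$ as above; (2) set up the coupling between (a) the experiment "draw $\Dtest^n$, run $S$ on the $(n{+}1)$-st draw" and (b) the experiment "draw $\Dtrain$, run the $\eta$-noisy $T$, condition on index $i$ being corrupted", checking that the distributions of (modified training set, distinguished index, true label of distinguished point) agree — here exchangeability of the i.i.d. sample and the fact that an $\eta$-noisy label conditioned on "being resampled" is exactly uniform both get used; (3) conclude $\TestAcc_{S,\cD} = \Pr[\tilde f(x_i) = y_i \mid \tilde y_i \neq y_i] = \NTrainAcc_{T,\cD}(\eta)$ exactly under this idealized coupling; (4) account for the discrepancy between running $T$ on a genuinely $\eta$-noisy set of size $n$ versus one with exactly one planted corruption — this is the source of the $o(1)$ term, handled by noting that the number of corrupted indices in the $\eta$-noisy experiment concentrates around $\eta n$ and that a single forced corruption changes the induced distribution on $T$'s behavior by $O(1/n)$ in the relevant averaged sense (or by a hybrid argument over how many coordinates are corrupted); (5) finish with the identity $\NTrainAcc_{T,\cD}(\eta) = \TestAcc_{T,\cD} + \text{rationality-gap}(T)$, which holds when the rationality gap is positive so the $(\cdot)_+$ is inactive, and use the hypothesis $\TrainAcc_{T,\cD}(\eta) \geq \NTrainAcc_{T,\cD}(\eta)$ to validate the setup.

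The main obstacle I expect is step (4): making rigorous the claim that $S$'s behavior — which feeds $T$ a training set with \emph{exactly one} deliberately corrupted coordinate — is $o(1)$-close to $T$'s behavior on a training set with $\mathrm{Binomial}(n,\eta)$ corrupted coordinates. The clean way is probably to have $S$ itself also inject $\eta$-noise into the other $n-1$ coordinates (mimicking $T$'s noisy experiment) so that the planted point looks like "just another corrupted coordinate," reducing the gap to the difference between conditioning on "coordinate $j$ corrupted" versus not — which is exactly the kind of Bernoulli-information/averaging estimate already used in the proof of Theorem~\ref{thm:main} via Lemma~\ref{lem:bernoullirv}, and contributes only $o(1)$ when $\eta$ is a constant. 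One must be slightly careful that $S$ is allowed to be computationally expensive (the paper explicitly flags this), so there is no efficiency constraint to respect — only the distributional identity matters.
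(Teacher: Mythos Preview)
Your proposal is correct and essentially identical to the paper's proof: the paper's $S$ stores the training set, and at inference on $x$ replaces a uniformly random coordinate by $(x,\tilde y)$ with $\tilde y$ uniform, runs $T$ on the modified set, and outputs its prediction on $x$---exactly your ``plant the test point with a random label'' construction, with your refinement of also injecting $\eta$-noise into the remaining coordinates implicit in the paper's notation. Two small deviations worth noting: the paper handles the $o(1)$ term via the total-variation distance between $\mathrm{Binom}(n,\eta)$ and $\mathrm{Binom}(n,\eta)\pm 1$ (not via Lemma~\ref{lem:bernoullirv}, which is not the right tool here), and it invokes the hypothesis $\TrainAcc_{T,\cD}(\eta)\ge\NTrainAcc_{T,\cD}(\eta)$ specifically to cover the probability-$1/|\cY|$ event that the planted label $\tilde y$ accidentally equals the true label, in which case the planted point behaves like a clean sample rather than a corrupted one.
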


The assumption, stated differently, implies that the memorization gap will be positive. We expect this assumption to be true for any reasonable training procedure $T$ (see right panel of \cref{fig:threegaps}), since performance on noisy train samples will not be better than the overall train accuracy. Indeed, it holds in all the experiments described in Section \ref{sec:experiments}.
In particular (since we can always add noise to our data), the above means that if the rationality gap is positive, we can use the above to improve the test performance of ``irrational'' networks. We now provide a proof for the theorem.

\begin{proof}
Let $T$ be a procedure with positive rationality gap that we are trying to transform. Our new algorithm $S$ would be the following:

\begin{itemize}
    \item \textbf{Training:} On input a training set $D=(\vx,\tilde{\vy}) \in (\cX\times\cY)^n$, algorithm $S$ does not perform any computation, but merely  stores the dataset $D$. Thus the ``representation'' of a point $x$ is simply $(x,D)$.
    
    \item \textbf{Inference:} On input a data point $x$ and the original training dataset $D$, algorithm $S$ chooses $i\sim [n]$ and lets $D'$ be the training set obtained by replacing $(x_i,y_i)$ with $(x,\tilde{y})$ where $\tilde{y}$ is chosen uniformly at random. We then compute $f=T(D')$, and output $f(x)$. 
\end{itemize}

First note that while the number of noisy samples could change by one by replacing $(x_i,y_i)$ with $(x,\tilde{y})$, since this number is distributed according to the Binomial distribution with mean $\eta n$ and standard deviation $\sqrt{(1-\eta)\eta n} \gg 1$,
this change can affect probabilities by at most $o(1)$ additive factor (since the statistical distance between the distribution $Binom(\eta,n)$ and $Binom(\eta,n)+1$ is $o(1)$).
If $\cY$ has $k$ classes, then with probability $1-1/k$ we will make $(x,\tilde{y})$ noisy ($y\ne \tilde{y}$) in which case the expected performance on it will be $\NTrainAcc_T(\eta)$. With probability $1/k$, we choose the correct label $y$ in which case performance on this sample will be equal to the expected performance on clean samples which by our assumptions is at least $\NTrainAcc_T(\eta)$ as well. Hence, the accuracy on the new test point is at least $\NTrainAcc_T(\eta)$.
\end{proof}

We stress that the procedure described above, while running in ``polynomial time'', is not particularly practical, since it makes \emph{inference} as computationally expensive as training.
However, it is a proof of concept that irrational networks are, to some extent, ``leaving performance on the table''.

\vspace{-0.4cm}
\section{Conclusions and open questions}
\label{sec:disc}

This work demonstrates that SSS algorithms have small generalization gaps. 
While our focus is on the  \emph{memorization gap}, our work motivates more investigation of both the \emph{robustness} and \emph{rationality} gaps. 
In particular, we are not aware of any rigorous bounds for the rationality gap of SSS algorithms,  but we view our ``performance on the table'' theorem (Theorem~\ref{thm:perfontable}) as a strong indication that it is close to zero for natural algorithms.
Given our empirical studies, we believe the assumptions of small robustness and rationality conform well to practice.

Our numerical bounds are still far from tight, especially for ImageNet, where evaluating the bound (more so with augmentations) is computationally expensive.
Nevertheless, we find it striking that already in this initial work, we get non-vacuous (and sometimes quite good) bounds.
Furthermore, the fact that the empirical RRM bound is often close to the generalization gap, shows that there is significant room for improvement.

Overall, this work can be viewed as additional evidence for the advantages of SSS algorithms over end-to-end supervised learning.
Moreover, some (very preliminary) evidence shows that end-to-end supervised learning implicitly separates into a representation learning and classification phases \citep{morcos2018insights}. 
Understanding the extent that supervised learning algorithms implicitly perform SSS learning is an important research direction in its own right.
To the extent this holds, our work might shed light on such algorithms' generalization performance as well.

\section{Acknowledgements}

We thank Dimitris Kalimeris, Preetum Nakkiran, and Eran Malach for
comments on early drafts of this work.
This work supported in part by NSF award CCF 1565264, IIS 1409097, DARPA grant W911NF2010021, and a Simons Investigator Fellowship.  We also thank Oracle and Microsoft for grants used for computational resources. Y.B is partially supported by
MIT-IBM Watson AI Lab. Work partially performed while G.K. was an intern at Google Research.

\newpage
\appendix

\setcounter{table}{0}
\counterwithin{table}{section}
\renewcommand{\thetable}{\thesection.\arabic{table}}
\counterwithin{figure}{section}
\renewcommand{\thefigure}{\thesection.\arabic{figure}}

\section{Mutual information facts}\label{appendixa}
\begin{lemma} \label{lem:bernoullirv}
If $A,B$ are two Bernoulli random variables with nonzero expectation then,
$$| \E[A | B=1] - \E[A] |  \leq \sqrt{\tfrac{1}{2}I(A;B)}/ \E[B].$$
\end{lemma}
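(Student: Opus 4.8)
The plan is to deduce the inequality from Pinsker's inequality applied to a suitable pair of Bernoulli distributions. Write $\beta = \E[B] = \Pr[B=1] \in (0,1]$; if $\beta = 1$ then $\E[A\mid B=1] = \E[A]$ and the left-hand side is zero, so we may assume $\beta < 1$ (and we may likewise dispose of the degenerate cases $\E[A]\in\{0,1\}$, in which $A$ is constant and the left-hand side again vanishes). The starting point is the ``mutual information as an expected KL divergence'' identity
$$
I(A;B) \;=\; \beta \cdot \KL\bigl(p_{A\mid B=1}\,\|\,p_A\bigr) \;+\; (1-\beta)\cdot \KL\bigl(p_{A\mid B=0}\,\|\,p_A\bigr) \;\geq\; \beta \cdot \KL\bigl(p_{A\mid B=1}\,\|\,p_A\bigr),
$$
where each conditional law is well defined and each KL term is finite, since $\Pr[A=a] \geq \Pr[A=a,B=1] > 0$ whenever $\Pr[A=a\mid B=1]>0$.

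Next I would apply Pinsker's inequality, $\KL(P\|Q) \geq 2\,\|P-Q\|_{\mathrm{TV}}^2$, to $P = p_{A\mid B=1}$ and $Q = p_A$, using the elementary fact that the total variation distance between two Bernoulli laws equals the gap between their means: $\|p_{A\mid B=1} - p_A\|_{\mathrm{TV}} = \bigl|\E[A\mid B=1] - \E[A]\bigr|$. Chaining this with the previous display yields $I(A;B) \geq 2\beta\bigl(\E[A\mid B=1]-\E[A]\bigr)^2$, hence $\bigl|\E[A\mid B=1]-\E[A]\bigr| \leq \sqrt{I(A;B)/(2\beta)}$. Since $\beta \leq 1$ we have $\sqrt{I(A;B)/(2\beta)} \leq \sqrt{\tfrac12 I(A;B)}\,/\,\beta$, which is exactly the claimed bound (indeed the intermediate bound is slightly stronger).

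I do not anticipate a genuine obstacle: this is a two-line argument once the right tools are in hand. The only points that need care are (i) the degenerate cases above, where the left-hand side is zero and the inequality is trivial; (ii) stating Pinsker with the constant matching the logarithm base used in defining $I$ (the form above is for natural logarithms, keeping the factor $2$, consistent with the rest of the paper); and (iii) the one-line check that $\|p_{A\mid B=1}-p_A\|_{\mathrm{TV}} = |\E[A\mid B=1]-\E[A]|$ for Bernoulli variables. An alternative route avoids conditioning: verify the covariance identity $\E[A\mid B=1]-\E[A] = \Cov(A,B)/\E[B]$ and note that in any $2\times 2$ joint distribution every cell differs from the corresponding product-of-marginals cell by exactly $|\Cov(A,B)|$, so $\|p_{A,B} - p_A\otimes p_B\|_{\mathrm{TV}} = 2|\Cov(A,B)|$; applying Pinsker to the joint versus the product of marginals then gives the same conclusion. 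I would present whichever of the two formulations reads more cleanly.
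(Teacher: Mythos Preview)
Your proposal is correct. The alternative route you sketch at the end---applying Pinsker to the joint $p_{A,B}$ versus the product $p_A\otimes p_B$ and reading off the covariance---is precisely what the paper does: it writes $I(A;B)=\KL(p_{A,B}\|p_Ap_B)$, applies Pinsker to get $|\Pr[A=1,B=1]-\Pr[A=1]\Pr[B=1]|\le\sqrt{\tfrac12 I(A;B)}$, and divides through by $\E[B]$.

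Your \emph{main} route is genuinely different: you decompose $I(A;B)$ as an average of conditional KL divergences, drop the $B=0$ term, and apply Pinsker to $p_{A\mid B=1}$ versus $p_A$. This yields the intermediate bound $|\E[A\mid B=1]-\E[A]|\le\sqrt{I(A;B)/(2\beta)}$, which is strictly sharper than the paper's $\sqrt{\tfrac12 I(A;B)}/\beta$ by a factor of $\sqrt{\beta}$. The paper's argument is a touch more direct (one identity, one Pinsker, one division) and avoids the need to handle the degenerate $\beta=1$ and $\E[A]\in\{0,1\}$ cases separately; your argument buys the improved dependence on $\beta$, which could matter downstream if one cared about the noise-rate dependence in Theorem~\ref{thm:main}. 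Either presentation is fine.
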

\begin{proof}
A standard relation between mutual information and KL-divergence gives,
$$
I(A;B) = D_{KL}(p_{A,B}||p_Ap_B).
$$
On the other hand, by the Pinsker inequality,
$$
\sup_{S \subseteq \{0,1\} \times \{0,1\}} |p_{A, B}(S) - p_{A\times B}(S)| \le \sqrt{\frac{1}{2}D_{KL}(p_{A,B}||p_Ap_B)}
 = \sqrt{\frac{1}{2}I(A,B)}.$$
Thus (letting $S=\{(1,1)\}$), 
$$\left| \Pr[A=1, B=1] - \Pr[A=1]  \Pr[B=1] \right| \leq  \sqrt{\tfrac{1}{2}I(A,B)}.$$
Consequently, 
$$\left| \E[A|B=1] - \E[A]\right| \leq \sqrt{\tfrac{1}{2}I(A,B)})/\E(B)$$
\end{proof}

\begin{lemma}
\label{lemma2}
For three random variables $W,X,Y$, s.t. $X$ and $Y$ are independent, 
$$I(W; X, Y) \ge I(W; X) + I(W; Y).$$
\end{lemma}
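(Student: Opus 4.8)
The plan is to reduce the claimed inequality to the statement that conditioning on a variable that is independent of $Y$ cannot decrease the mutual information between $W$ and $Y$, and then to invoke the elementary fact that conditioning reduces entropy. Concretely, I would first apply the chain rule for mutual information to the pair $(X,Y)$, which gives the exact identity
$$I(W;X,Y) = I(W;X) + I(W;Y\mid X).$$
Comparing with the desired bound, it therefore suffices to prove that $I(W;Y\mid X) \geq I(W;Y)$.

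For this second step I would expand both quantities in terms of Shannon entropy: $I(W;Y\mid X) = H(Y\mid X) - H(Y\mid X,W)$ and $I(W;Y) = H(Y) - H(Y\mid W)$. Now two ingredients finish it. First, because $X$ and $Y$ are independent, $H(Y\mid X) = H(Y)$. Second, since conditioning on additional information never increases entropy, $H(Y\mid X,W) \leq H(Y\mid W)$. Substituting these into the expression for $I(W;Y\mid X)$ gives $I(W;Y\mid X) = H(Y) - H(Y\mid X,W) \geq H(Y) - H(Y\mid W) = I(W;Y)$, which is exactly what is needed; combining with the chain-rule identity from the first step completes the argument. Equivalently, one can write the gap as $I(W;Y\mid X) - I(W;Y) = I(X;Y\mid W) - I(X;Y) = I(X;Y\mid W) \geq 0$, using that $I(X;Y)=0$ by independence and that conditional mutual information is nonnegative; both routes are the same computation.

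There is no genuine obstacle here — the statement is a routine consequence of standard mutual-information identities — but the one point worth flagging is that the independence hypothesis enters in exactly one place (to equate $H(Y\mid X)$ with $H(Y)$), and the inequality genuinely fails without it. A symmetric argument that peels off $Y$ first rather than $X$ yields the same bound, so the order is immaterial, and the usual discrete/finite measurability conventions are all that is needed for the entropy manipulations.
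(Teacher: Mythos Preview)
Your proposal is correct and follows essentially the same argument as the paper: apply the chain rule $I(W;X,Y)=I(W;X)+I(W;Y\mid X)$, then show $I(W;Y\mid X)\ge I(W;Y)$ using $H(Y\mid X)=H(Y)$ from independence and $H(Y\mid W,X)\le H(Y\mid W)$ from the fact that conditioning reduces entropy. Your additional remark about the equivalent formulation via $I(X;Y\mid W)\ge 0$ is a nice aside but not needed.
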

\begin{proof}

Using the chain rule for mutual information we have: 
$$I(W; X,Y) = I(W; X) + I(W; Y |X)$$
Since $X,Y$ are independent, $H(Y|X) = H(Y)$ and since conditioning only reduces entropy, we have $H(Y| W, X)\le H(Y|W)$. Combining the two we get,
\begin{align*}
    I(W;Y|X) &= H(Y|X) - H(Y | W, X)  \\
    &\ge H(Y) - H(Y|W) \\
    &= I(W; Y) 
\end{align*}
Thus we have that $I(W;X,Y) \ge I(W;X) + I(W;Y)$. 
\end{proof}
Note that by induction we can extend this argument to show that $I(W;X_1,...,X_n)\ge \sum I(W; X_i)$ where $X_i$ are mutually independent.

\section{Experimental details} \label{app:exp-methods}

We perform an empirical study of the RRM bound for a wide variety of self-supervised training methods on the ImageNet \citep{imagenet} and CIFAR-10 \citep{cifar10} training datasets. We provide a brief description of all the self-supervised training methods that appear in our results below. For each method, we use the official pre-trained models on ImageNet wherever available. Since very few methods provide pre-trained models for CIFAR-10, we train models from scratch. The architectures and other training hyper-parameters are summarized in \cref{tab:hyperparam-table-imagenet} and \cref{tab:hyperparam-table-cifar}. Since our primary aim is to study the RRM bound, we do not optimize for reaching the state-of-the-art performance in our re-implementations. For the second phase of training, we use L2-regularized linear regression, or small non-interpolating Multi-layer perceptrons (MLPs). 

\subsection{Self-supervised training methods (\texorpdfstring{$\Tpre$}{ Tpre})}
\label{app:Tpre}

There is a variety of self-supervised training methods for learning representations without explicit labels. The two main branches of self-supervised learning methods are:

\begin{enumerate} 
\item {\it Contrastive learning:} These methods seek to find an embedding of the dataset that pushes a \emph{positive} pair of images close together and a pair of \emph{negative} images far from each other. For example, two different augmented versions of the same image may be considered a positive pair, while two different images may be considered a negative pair. Different methods such as Instance Discrimination, MoCo, SimCLR, AMDIM, differ in the the way they select the positive/negative pairs, as well other details like the use of a memory bank or the encoder architecture. (See \cite{yadim} for detailed comparison of these methods.) 

\item {\it Handcrafted pretext tasks:} These methods learn a representation by designing a fairly general supervised task, and utilizing the penultimate or other intermediate layers of this network as the representation. Pretext tasks include a diverse range of methods such as predicting the rotation angle of an input image \citep{rotation}, solving jigsaw puzzles \citep{jigsaw_noroozi}, colorization \citep{colorization}, denoising images \citep{denoisingAE} or image inpainting \citep{contextAE}. 

\end{enumerate}

Additionally, adversarial image generation can be used for by augmenting a the image generator with an encoder \citep{bigbigan}. We focus primarily on contrastive learning methods since they achieve state-of-the-art performance. We now describe these methods briefly.

{\bf Instance Discrimination:} \citep{insdis} In essence, Instance Discrimination performs supervised learning with \emph{each} training sample as a separate class. They minimize the non-parametric softmax loss given below for each training sample $v = f_{\theta}(x)$ 

\begin{equation}
    J(\theta) = - \sum_{i=1}^n log \Bigg( \frac{\exp(v_i^Tv/\tau)}{ \sum_{j=1}^n \exp(v_i^Tv/\tau)} \Bigg)
\end{equation}

where $v_i = f_{\theta}(x_i)$ is the feature vector for the $i$-th example and $\tau$ is a temperature hyperparameter. They use memory banks and a contrastive loss (also known as Noise Contrastive Estimation or NCE \citep{nce}) for computing this loss efficiently for large datasets. So in this case, a positive pair is an image and itself, while a negative pair is two different training images.

{\bf Momentum Contrastive (MoCo):} \citep{moco} MoCo replaces the memory bank in Instance Discrimination with a momentum-based query encoder. MoCoV2 \citep{mocov2} applies various modifications over SimCLR, like a projection head, and combines it with the MoCo framework for improved performance.

{\bf AMDIM:} \citep{amdim} AMDIM uses two augmented versions of the same image as possitive pairs. For these augmentations, they use random resized crops, random jitters in color space, random horizontal flips and random conversions to grayscale. They apply the NCE loss across multiple scales, by using features from multiple layers. They use a modified ResNet by changing the receptive fields to decrease overlap between positive pairs.

{\bf CMC:} \citep{cmc} CMC creates two views for contrastive learning by converting each image into the Lab color space. L and ab channels from the same image are considered to be a positive pair, while those from two different images are considered to be a negative pair.

{\bf PiRL:} \citep{pirl} PiRL first creates a jigsaw transformation of an image (it divides an image into 9 patches and shuffles these patches). It treats an image and its jigsaw as a positive pair, and that of a different image as a negative pair. 

{\bf SimCLRv1 and SimCLRv2:} \citep{simclrv1, simclrv2} SimCLR also use strong augmentations to create positive and negative pairs. They use random resized crops, random Gaussian blurring and random jitters in color space. Crucially, they use a projection head that maps the representations to a 128-dimensional space where they apply the contrastive loss. They do not use a memory bank, but use a large batch size. 

{\bf InfoMin:} \citep{infomin} InfoMin uses random resized crops, random color jitters and random Gaussian blurring, as well as jigsaw shuffling from PiRL.

\subsection{Simple Classifier (\texorpdfstring{$\Tfit$}{ Tfit})}
After training the representation learning method, we extract representations $r$ for the training and test images. We do not add random augmentations to the training images (unless stated otherwise). Then, we train a simple classifier on the dataset $\{r(x_i), y_i\}_{i=1}^n $. We use a linear classifier in most cases, but we also try a small multi-layer perceptron (as long as it has few parameters and does not interpolate the training data). We add weight decay in some methods to achieve good test accuracy (see  \cref{tab:hyperparam-table-imagenet} and \cref{tab:hyperparam-table-cifar} for values for each method). For the noisy experiment, we set the noise level to $\eta = 5\%$. To compute the complexity bound $\cmi$ we run 20 trials (same experiment with different random seed) of the noisy experiment for CIFAR-10 and 50 trials for ImageNet.

\subsection{Experimental details for each plot}
\label{app:fig-details}
{\bf Figure \ref{fig:intro}.} This figure shows the robustness, rationality and memorization gap for various SSS algorithms trained on CIFAR-10. The type of self-supervised method, the encoder architecture, as well as the training hyperparameters are described in Table \ref{tab:hyperparam-table-cifar}. For the second phase $\Tfit$, we use L2-regularized linear regression for all the methods. For each algorithm listed in Table \ref{tab:hyperparam-table-cifar}, the figure contains 2 points, one without augmentations, and one with augmentations. Further, we compute the complexity measure $\cmi$ for all the methods. All the values (along with the test accuracy) are listed in Table \ref{tab:all-cifar}. 

{\bf Figure \ref{fig:threegaps}.} This figure shows the robustness, rationality and memorization for CIFAR-10 for all the same methods as in Figure \ref{fig:intro}. We only include the points without augmentation to show how rationality behaves when $(\Dtrain,\Dtest)$ are identical. All the values (along with the test accuracy) are listed in Table \ref{tab:all-cifar}. In addition, we add three end-to-end fully supervised methods (red circles) to compare and contrast the behavior of each of the gaps for SSS and supervised methods. For the supervised architectures, we train a Myrtle-5 \citep{mcnn} convolutional network, a ResNet-18 \citep{resnet} and a WideResNet-28-10 \citep{wrn} with standard hyperparameters.

{\bf \cref{fig:exp-threegaps} and \cref{fig:IN-MAIN}.} These figures show the robustness, rationality and memorization for the ImageNet dataset. The type of self-supervised method, the encoder architecture, as well as the training hyperparameters are described in Table \ref{tab:hyperparam-table-imagenet}. For the second phase $\Tfit$, we use L2-regularized linear regression for all the methods. The figures also contain some points with 10 augmentations per training image. Further, we compute the complexity measure $\cmi$ for all three methods---SimCLRv2 with architectures ResNet-50-1x and ResNet-101-2x. All the values (along with the test accuracy) are listed in Table \ref{tab:all-imagenet}. 

{\bf Figure \ref{fig:aug-cf10}} This figure shows the effect of increasing augmentations. We add $t =\{2,...,10 \}$ augmentations and re-train the simple classifier. We do this for the CIFAR-10 dataset, AMDIM self-supervised training with the AMDIM encoder and linear regression (see Table \ref{tab:hyperparam-table-cifar} for the hyperparameters).

\subsection{Additional Results}

\subsubsection{Generalization error of SSS algorithms} \label{sec:emp-radhemacher}

To show that SSS algorithms have qualitatively different generalization behavior compared to standard end-to-end supervised methods, we repeat the experiment from \cite{ZhangBHRV17}. We randomize all the training labels in the CIFAR-10 dataset and train 3 high-performing SSS methods on these noisy labels. For results see Table~\ref{table:zhang}. Unlike fully supervised methods, SSS algorithms do not achieve 100\% training accuracy on the dataset with noisy labels. In fact, their training accuracies are fairly low ($\approx15$-$25\%$). This suggests that the empirical Rademacher complexity is bounded. The algorithms were trained without any augmentations during the simple fitting phase for both SSS and supervised algorithms. The SSS methods were trained using parameters described in Table \ref{tab:hyperparam-table-cifar}.

\begin{table}[H]
\centering
\caption{Train and Test performance on 100\% label noise for fully supervised vs. SSS algorithms on CIFAR-10. The first row is from \citet{ZhangBHRV17}, while the second one is our results for SSS methods averaged over 5 runs without augmentations.}
\label{table:zhang}
\begin{tabular}{@{}cccccccc@{}} 
    \toprule
    \makecell{Training method}  & \makecell{Architecture/Method} & \makecell{Train Acc} & \makecell{Test Acc} \\ 
    \midrule

    \multirow{2}{*}{Supervised \citep{ZhangBHRV17}}  & Inception (no aug) & 100\% & 86\% \\ 
    
                        & (fitting random labels) & {\color{red}\bf 100\%} & {\color{red}\bf 10\%} \\ 
    \midrule
    
    \multirow{4}{*}{SSS}  &  SimCLR (ResNet-50) + Linear & 94\%  & 92\% \\ 
    
                        & (fitting random labels) & {\color{blue}\bf 22\%}& {\color{red}\bf 10\%} \\ 
                        &  AMDIM (AMDIM Encoder) + Linear & 94\%  & 87.4\% \\ 
    
                        & (fitting random labels) & {\color{blue}\bf 18\%} & {\color{red}\bf 10\%} \\ 
                        &  MoCoV2 (ResNet-18) + Linear & 69\%  & 67.6\% \\ 
    
                        & (fitting random labels) & {\color{blue}\bf 15\%} & {\color{red}\bf 10\%} \\

    \bottomrule
\end{tabular} \\
\end{table}

\subsection{RRM bound with varying noise parameter}
We now investigate the effect of varying noise levels on the three gaps as well as on the complexity. We see that the robustness gap increases as we add more noise---this is expected as noise should affect the clean training accuracy. We also observe that the memorization gap decreases, suggesting that $\cmi_\eta$ as a function of $\eta$ goes down faster than $\eta^2$ (see \cref{sec:theory}). The Theorem \ref{thm:main} bound on memorization gap also decays strongly with the $\eta$, becoming more tight as the noise increases. 

\begin{figure}[ht]
    \centering
    \includegraphics[width=0.75\textwidth]{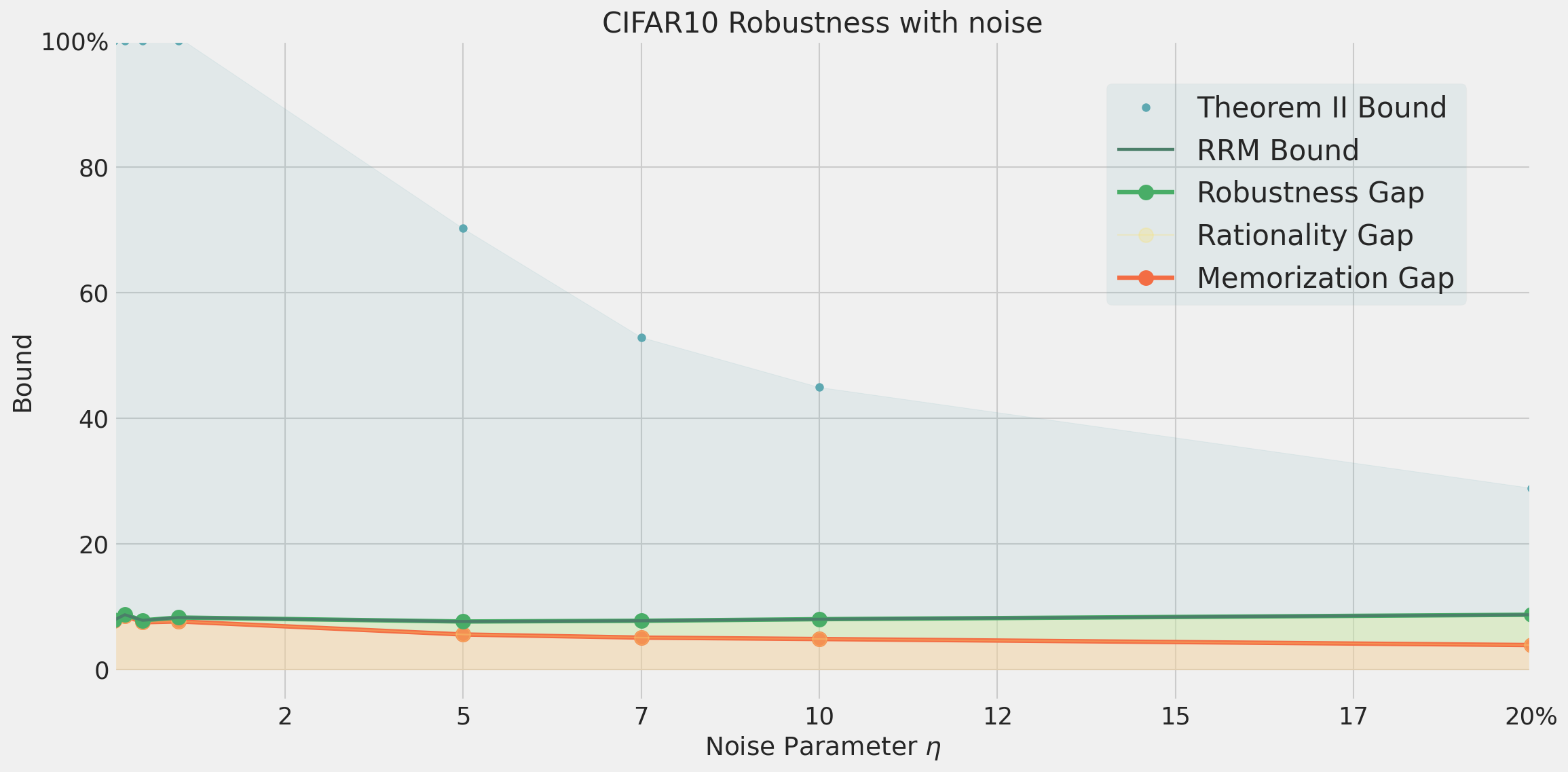}        
        \caption{RRM + bound with changing $\eta$}
        \label{fig:varyeta}
\end{figure}

\subsubsection{Convergence of complexity measures}
\label{app:convergence}
We now plot (see \cref{fig:cmi}) the complexity measures $\cmi$ and $\ccmi$ with increasing number of trials for one of the SSS algorithms. As expected, $\cmi < \ccmi$ and $\cmi$ converges in about 20 trials for CIFAR-10. On the other hand, the complexity computations for ImageNet need many more trials for convergence, since it contains about $10$ augmentations $\times 1.2$ million training samples making it cost prohibitive to compute for all the methods. For the CIFAR-10, we use AMDIM with the AMDIM encoder architecture without augmentations. For ImageNet, we use SimCLRv2 with the ResNet-101 architecture with 10 augmentations per training sample.

\begin{figure}[ht]

    \centering
    \begin{subfigure}{0.45\textwidth}
    \vspace{-0.1cm}
    \centering
    \includegraphics[width=\textwidth]{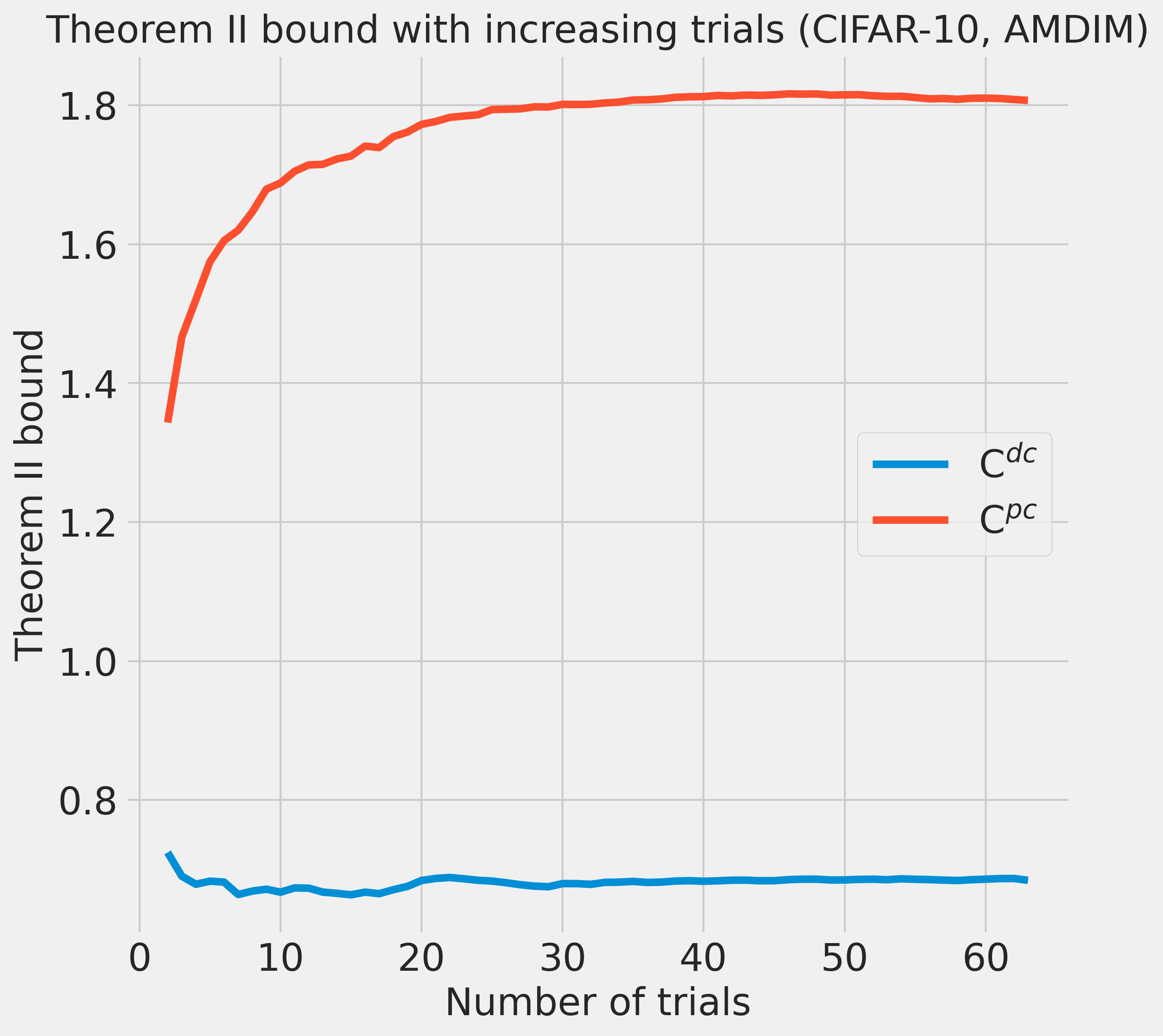}
    \caption{Theorem II bound with increasing trials. The bound based on $\cmi$ is lower than $\ccmi$ as expected, and converges within 20 trials.}
    \end{subfigure}\hspace{0.3cm}
    \begin{subfigure}{0.45\textwidth}
    \centering
    \includegraphics[width=\textwidth]{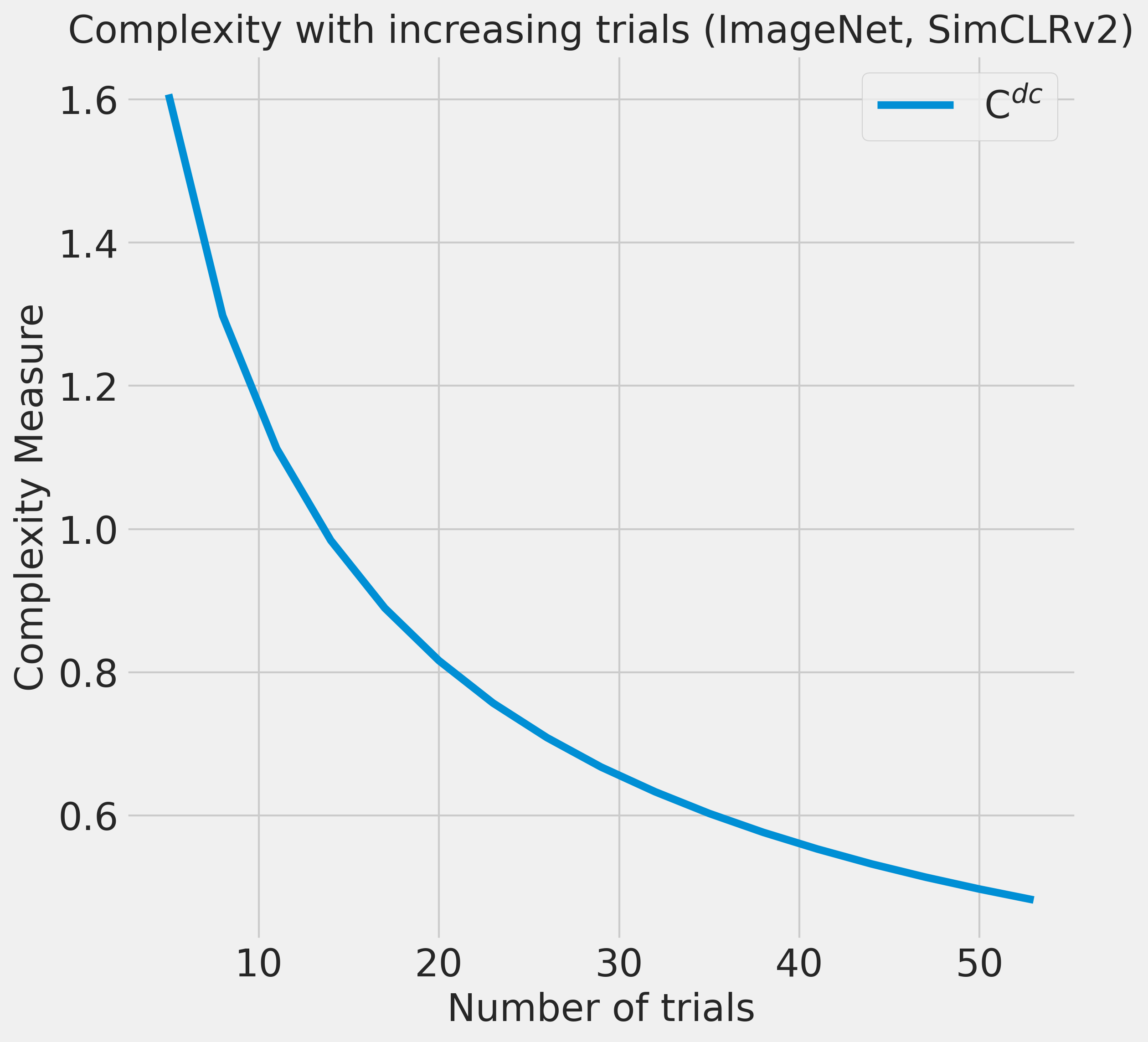}
    \caption{Theorem II bound with increasing trials. $\cmi$ is slow to converge due to the large dataset size (10 augmentations $\times$ 1.2 million training samples).}
    \end{subfigure}
    \caption{Convergence of Theorem II bounds for CIFAR-10 and ImageNet}
    \label{fig:cmi}
\end{figure}

\section{Examples of algorithms with large gaps} \label{app:examples}

While we argued that SSS algorithms will tend to have small robustness, rationality, and memorization gaps, this does not hold in the worst case and there are examples of such algorithms that exhibit large gaps in each of those cases.

\subsection{Large robustness gap} \label{sec:lexamples:robustness}

Large robustness gap can only arise via computational (as opposed to statistical) considerations.
That is, if a training procedure outputs a classifier $f\in \cF$ that achieves on average accuracy $\alpha$ on a clean train set $(X,Y)$, then with high probability, if $(X,\tilde Y)$  is an $\eta$-noisy train set then \emph{there exists} $f \in \cF$ that achieves $\alpha(1-\eta)$ accuracy on this train set (by fitting only the ``clean'' points).

However, the training algorithm might not always be able to find such a classifier.
For example, if the distribution has the form $(x,y)=(x, \sum a_j x_j \mod 2)$ where $x\sim GF(2)^\ell=\mathbb Z_2^\ell$  and $a\in GF(2)^\ell$ is some hidden vector, then there is an efficient algorithm (namely Gaussian elimination) to find $a$ given the samples $(x,y)$ and hence get accuracy $1$.
However, for every $\epsilon>0$ and $\eta>0$, there is no known efficient algorithm that, given a $1-\eta$ perturbed equations of the form $\{ \langle a,x_i \rangle = \tilde{y}_i \}_{i\in [n]}$ finds $a' \in  GF(2)^\ell$ such that $\sum a'_j x_j = \sum a_j x_j \mod 2$ on a $1/2+\epsilon$ fraction of the $x$'s.
This is known as the \emph{learning parity with noise (LPN)} problem~\citep{BlumFKL93}.

The assumption of robustness is \emph{necessary} for a small generalization gap, in the sense that we can come up with (contrived) examples of algorithms that have small rationality and memorization gaps while still having large generalization gap.
For example, consider an algorithm $T$ that has large generalization gap (high train accuracy and small test accuracy), and suppose we augment to the following algorithm

$$T'(\vx,\vy) = \begin{cases}
                T(\vx,\vy)  &  \text{ if $\vy$ is ``clean''} \\
                0 &  \text{ if $\vy$ is ``noisy''} 
                \end{cases}$$
                
where $0$ denotes the constant zero function (e.g., some trivial classifier) and we use some algorithm to estimate whether or not the labels are noisy.
(Such estimates can often be achieved in many natural cases.)
The algorithm $T'$ will inherit the generalization gap of $T$, since that depends only on the experiment without noise. Since performance on noisy and clean training samples will be the same (close to random), $T'$ will have zero memorization gap. Since we have assumed small test accuracy, it will have zero rationality gap also.

\subsection{Large rationality gap}

As discussed in Section~\ref{sec:perfontable}, in the case that $\Dtrain = \Dtest^n$,  a robust algorithm with large rationality gap leaves ``performance on the table''.
We can obtain such algorithms by artificially dropping performance on the test data.
For example, in the SSS framework, since the representation $r$ is over-parameterized and can memorize the entire train set, we can consider the trivial representation 

$$r(x) = \begin{cases}x & \text{$x$ in train set} \\ 0 & \text{otherwise} \end{cases}$$

If we now train some simple classifier on $r(x)$ then it can have non-trivial performance on the noisy train samples, while getting trivial accuracy on all samples outside the train set.

In cases where $\Dtrain$ and $\Dtest$ are different (for example when $\Dtrain$ is an augmented version of $\Dtest$) then we can no longer claim that a large rationality gap corresponds to ``leaving performance on the table''. For example, we do observe (mild) growth in the rationality gap as we add more augmented points to the training set. 

\subsection{Large memorization gap}

It is not hard to find examples of networks with large memorization gap. Indeed, as mentioned before, any standard interpolating supervised learning algorithm will get a memorization gap close to $1$.

\section{Simple robustness bounds} \label{app:sec}

While robustness is not the focus of this work, we collect here two observations on the robustness of the least-square and minimum risk classifiers. These bounds are arguably folklore, but we state them here for completeness. 

\subsection{Robustness of least squares classifiers} \label{sec:LSE}

One can prove robustness for classes of algorithms under varying assumptions.
As a simple example, we record here a self-contained observation of how margin leads to robustness in least squares minimization.
This is a very simple but also pessimistic bound, and much better ones often hold.

\begin{lemma}
Let $x_1,\ldots,x_n \in \R^d$ and $y_1,\ldots,y_n \in [k]$, and consider a linear function $f:\R^d \rightarrow \R^k$ that minimizes the quantity 
$\sum_{i \in [n], j \in [k]} | f(x_i)_j - \characteristic_{y_i=j} |^2$, and suppose that for $p$ fraction of the $i$'s, the maximum over $j\in[k]$ of $f(x_i)$ is $\gamma$ larger than the second-largest value.

Then in expectation, if we let $\tilde{\vy}$ be the $\eta$-noisy version of $\vy$ and $\tilde{f}$ minimizes $\sum_{i \in [n], j \in [k]} | \tilde{f}(x_i)_j - \characteristic_{\tilde{y}_i=j} |^2$, we get that $\arg\max_j \tilde{f}(x_i) = y_i$ for at least $p-4\eta/\gamma^2$ fraction of the $i$'s.
\end{lemma}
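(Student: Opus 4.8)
The plan is to control the effect of the label noise on the least-squares solution $\tilde f$ by viewing it as a perturbation of $f$, and then argue that a small perturbation (in the relevant $\ell_2$ sense) can only flip the $\arg\max$ on a small fraction of the margin-$\gamma$ points. First I would set up the matrix picture: let $X \in \R^{n\times d}$ be the data matrix, $Y \in \R^{n\times k}$ the one-hot label matrix, and $\tilde Y$ its $\eta$-noisy version, so that $f$ corresponds to the least-squares fit $XF \approx Y$ and $\tilde f$ to $X\tilde F \approx \tilde Y$. The key linear-algebra fact is that the least-squares prediction map $Y \mapsto (\text{fitted values on the }x_i)$ is the orthogonal projection $P$ onto the column space of $X$, which is a contraction in Frobenius norm. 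Hence the vector of prediction changes $\Delta := (\tilde f(x_i) - f(x_i))_{i\in[n]}$, viewed as an $n\times k$ matrix, satisfies $\|\Delta\|_F = \|P(\tilde Y - Y)\|_F \le \|\tilde Y - Y\|_F$.

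Next I would compute $\E\|\tilde Y - Y\|_F^2$. Each row of $\tilde Y - Y$ is nonzero only when that sample was corrupted (probability $\eta$), and when it is corrupted the squared $\ell_2$ norm of the row difference is at most $2$ (two one-hot vectors differ in at most two coordinates, contributing $1$ each). So $\E\|\tilde Y - Y\|_F^2 \le 2\eta n$, and therefore $\E\|\Delta\|_F^2 \le 2\eta n$, i.e. $\E \sum_i \|\tilde f(x_i) - f(x_i)\|_2^2 \le 2\eta n$. Now for any index $i$ that is a ``good margin'' point for $f$ (the top value of $f(x_i)$ exceeds the runner-up by at least $\gamma$), the $\arg\max$ can change only if $\|\tilde f(x_i) - f(x_i)\|_\infty \ge \gamma/2$, which forces $\|\tilde f(x_i)-f(x_i)\|_2^2 \ge \gamma^2/4$. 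By Markov's inequality applied to the nonnegative quantity $\sum_i \|\tilde f(x_i)-f(x_i)\|_2^2$, the expected fraction of $i$'s with $\|\tilde f(x_i)-f(x_i)\|_2^2 \ge \gamma^2/4$ is at most $(2\eta n)/(n\gamma^2/4) = 8\eta/\gamma^2$.

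At this point one gets $p - 8\eta/\gamma^2$, which is off by a factor of $2$ from the claimed $p - 4\eta/\gamma^2$; I would tighten the bound by being slightly more careful — e.g. noting that a corrupted one-hot label actually changes the target row difference by squared norm exactly $2$ only when the new label differs from the old, and more importantly that flipping the $\arg\max$ on a $\gamma$-margin point requires the perturbation to move the gap by $\gamma$, which can be charged to $\|\tilde f(x_i) - f(x_i)\|_2^2 \ge \gamma^2/2$ rather than $\gamma^2/4$ (the perturbation must simultaneously decrease the winner and increase a competitor, or decrease the winner by a full $\gamma$, either way costing $\gamma^2/2$ in squared $\ell_2$), which yields exactly $4\eta/\gamma^2$. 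Finally I combine: the fraction of $i$ on which $f$ has good margin \emph{and} $\tilde f$ agrees with $f$ at the $\arg\max$ is at least $p - 4\eta/\gamma^2$, and on all those points $\arg\max_j \tilde f(x_i)_j = \arg\max_j f(x_i)_j = y_i$ (the last equality being part of the definition of the $p$ fraction). Taking expectations over the noise gives the statement.

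The main obstacle is pinning down the correct constant: everything hinges on (a) the contraction property of the projection, which is clean, and (b) exactly how much squared-$\ell_2$ movement is needed to flip a margin-$\gamma$ $\arg\max$, which is where the factor of $2$ between a naive bound and the stated bound lives. I would spend the care there; the rest is Markov plus a norm computation.
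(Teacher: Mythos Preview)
Your proposal is correct and follows essentially the same argument as the paper: view the least-squares predictions as the orthogonal projection of the (one-hot) labels, use the contraction property to bound $\|\tilde{\vec p}-\vec p\|^2$ by $\|\tilde{\vy}-\vy\|^2$ whose expectation is at most $2\eta n$, and then observe that flipping the $\arg\max$ at a $\gamma$-margin point costs at least $\gamma^2/2$ in squared $\ell_2$ (the paper phrases this exactly as you do in your tightening step, by shifting the winner down and the runner-up up each by $\gamma/2$), yielding the $4\eta/\gamma^2$ bound. Your initial $\gamma^2/4$ detour via the $\ell_\infty$ bound is unnecessary but harmless, and your final argument is the paper's.
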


\begin{proof}
We identify $\vy$ with its ``one hot'' encoding as a vector in $\R^{nk}$.
Let $V \subseteq \R^{nk}$ be the subspace of all vectors of the form $(g(x_1),\ldots,g(x_n))$ for linear $g:\R^d \rightarrow \R^k$.
If $f$ is the minimizer in the theorem statement, and $\vec{p}=(f(x_1),\ldots,f(x_n))$ then $\vec{p} = \Pi_V \vy$ where $\Pi_V$ is the orthogonal projection to the subspace $v$.
If $\tilde{f}$ is the minimizer for the noisy labels and $\tilde{\vec{p}} = (\tilde{f}(x_1),\ldots,\tilde{f}(x_n))$, then 
$\tilde{\vec{p}} = \Pi_V \tilde{\vy} = \Pi_V (\vy + \vec{e})$ where $\vec{e}$ is the noise vector $\tilde{\vy}-\vy$.

Hence $\| \vec{p} - \tilde{\vec{p}}\| = \| \Pi_V \vec{e} \| \leq \|\vec{e}\|$.
But in expectation $\|\vec{e}\|^2 \leq 2\eta n$ (since we flip a label with probability $\leq \eta$).
For every point $i$ for which the margin was at least $\gamma$ in $\vec{p}$, if $\tilde{\vec{p}}$'s prediction is different in $i$, then the contribution of the $i$-th block  to their square norm difference is at least $\gamma^2/2$ (by shifting the maximum coordinate by $-\gamma/2$ and the second largest one by $\gamma/2$).
Hence at most $4 \eta n / \gamma^2$ of these points could have different predictions in $\vec{p}$ and $\tilde{\vec{p}}$
\end{proof}

\subsection{Robustness of empirical risk minimizer} \label{sec:ERMrobust}

The (potentially inefficient) algorithm that minimizes the classification errors is always robust.

\begin{lemma}
Let $T(\vx,\vy) = \arg\min_{f \in\cF} \sum_{i=1}^n \characteristic_{f(x_i) \neq y_i}$. Then for every $\eta>0$,
$$\text{\textcolor{robustness}{Robustness gap}}(T) \leq 2\eta\;.$$
\end{lemma}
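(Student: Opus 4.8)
The plan is to compare, sample by sample, the empirical error of the clean minimizer $f=T(\vx,\vy)$ with that of the noisy minimizer $\tilde f=T(\vx,\tilde\vy)$, in both cases scored against the \emph{clean} labels $\vy$, and to use the optimality of $\tilde f$ on the noisy labels as a bridge between the two. For a classifier $g$ and a label vector $\vec z$, write $\mathrm{err}(g;\vec z)=\tfrac1n\sum_{i=1}^n \characteristic_{g(x_i)\neq z_i}$. Then $\TrainAcc = 1-\E[\mathrm{err}(f;\vy)]$ and $\TrainAcc(\eta) = 1-\E[\mathrm{err}(\tilde f;\vy)]$, so $\TrainAcc-\TrainAcc(\eta) = \E[\mathrm{err}(\tilde f;\vy)] - \E[\mathrm{err}(f;\vy)]$, and it suffices to bound this difference by $2\eta$.

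First I would record the elementary ``Lipschitz in the labels'' observation: if two label vectors $\vec z,\vec z'$ agree outside a set $S\subseteq[n]$, then $|\mathrm{err}(g;\vec z)-\mathrm{err}(g;\vec z')|\le |S|/n$ for every fixed classifier $g$, since changing one reference label alters a single indicator by at most one. I apply this with $S=\{i:\tilde y_i\neq y_i\}$, the set of corrupted coordinates. Second, fixing once and for all a deterministic tie-breaking rule in the $\arg\min$ (the same rule used to define $f$), the defining optimality of $\tilde f$ on the noisy labels gives $\mathrm{err}(\tilde f;\tilde\vy)\le \mathrm{err}(f;\tilde\vy)$. Chaining these, pointwise in $(\vx,\vy,\tilde\vy)$,
\[
\mathrm{err}(\tilde f;\vy)\;\le\;\mathrm{err}(\tilde f;\tilde\vy)+\tfrac{|S|}{n}\;\le\;\mathrm{err}(f;\tilde\vy)+\tfrac{|S|}{n}\;\le\;\mathrm{err}(f;\vy)+\tfrac{2|S|}{n}\,.
\]

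Finally I would take expectations over the draw of the train set and of the label noise. In the $\eta$-noisy experiment each coordinate is corrupted with probability $\eta(1-1/|\cY|)\le \eta$, so $\E[|S|/n]\le \eta$; hence $\E[\mathrm{err}(\tilde f;\vy)] - \E[\mathrm{err}(f;\vy)]\le 2\eta$, i.e. $\TrainAcc - \TrainAcc(\eta)\le 2\eta$, and since the robustness gap is the nonnegative part of this quantity the claim follows. Note the argument uses nothing about the structure of $\cF$.

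There is no substantive obstacle here; the only points requiring a little care are (i) pinning down a fixed tie-breaking convention so that the optimality inequality $\mathrm{err}(\tilde f;\tilde\vy)\le \mathrm{err}(f;\tilde\vy)$ holds verbatim, and (ii) the bookkeeping that ``$\eta$-noisy'' flips a label with probability at most $\eta$, which is what makes $\E[|S|/n]\le\eta$. (One could sharpen the constant to $2\eta(1-1/|\cY|)$, but $2\eta$ is all we need.)
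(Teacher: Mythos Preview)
Your proof is correct and follows essentially the same approach as the paper: both argue that switching between $\vy$ and $\tilde\vy$ changes any classifier's empirical error by at most the corrupted fraction, use optimality of $\tilde f$ on $\tilde\vy$ to sandwich, and then take expectations to get $2\eta$. Your presentation is in fact cleaner, making the ``Lipschitz in the labels'' step and the tie-breaking convention explicit where the paper leaves these implicit (and has a small typo in its displayed inequality).
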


\begin{proof}
Let $\vx,\vy$ be any train set, and let  $\alpha = \min_{g \in\cF} \sum_{i=1}^n \characteristic_{g(x_i) \neq y_i}$ and $f$ be the minimizer of this quantity.
Let $\tilde{\vy}$ be the $\eta$-noisy version of $\vy$ and let $\tilde{\eta}$ be the fraction of $i$ on which $y_i \neq \tilde{y}_i$.
Then,  
\begin{equation}
\sum_{i=1}^n \characteristic_{f(x_i) \neq y_i} \leq \alpha +\tilde{\eta} \;. \label{eq:kghkerewrgfgfv}
\end{equation}
Hence if  $\tilde{f}$ is the minimizer of (\ref{eq:kghkerewrgfgfv}) then we know that $\tilde{f}(x_i) \neq \tilde{y}_i$ for at most  $\alpha+\tilde{\eta}$ fraction of the $i$'s, and so $\tilde{f}(x_i) \neq y_i$  for at most $\alpha + 2\tilde{\eta}$ fraction of the $i$'s.
Since the train accuracy of $T$ is $1-\alpha$ and in expectation of $\tilde{\eta}$ is $\eta$, we get that in expectation 
$$
\TrainAcc_{T}(\eta) \geq \TrainAcc_T - 2\eta
$$
\end{proof}
\newpage
\section{Large Tables}

\begin{table}[ht]
\caption{Summary of all the methods, architectures and the corresponding results (gaps and accuracies) on CIFAR-10, sorted by generalization gap. While Figure~\ref{fig:intro} already plots this data, here we also provide the test performance of the corresponding models.}
\label{tab:all-cifar}
\footnotesize
\begin{tabular}{lllrrrrlrr}
\toprule
 \makecell{Method} &         Backbone & \makecell{Data \\ Aug} &  \makecell{Generalization \\ Gap} &  \makecell{Robustness} &  \makecell{Mem-\\orization} &  \makecell{Rationality} &  \makecell{Theorem II \\ bound} &  \makecell{RRM \\ bound} &  \makecell{Test \\ Acc} \\
\midrule
 mocov2 &         resnet18 &              True &               -7.35 &        0.07 &          0.21 &         0.00 &              3.47 &       0.28 &             67.19 \\
 mocov2 &  wide\_resnet50\_2 &              True &               -6.37 &        0.18 &          1.03 &         0.00 &              7.63 &       1.21 &             70.99 \\
 mocov2 &        resnet101 &              True &               -6.01 &        0.15 &          0.71 &         0.00 &              6.38 &       0.86 &             68.58 \\
 mocov2 &         resnet50 &              True &               -5.38 &        0.19 &          0.84 &         0.00 &              6.99 &       1.03 &             69.68 \\
 simclr &         resnet50 &              True &               -2.89 &        0.30 &          0.55 &         0.00 &              6.63 &       0.85 &             91.96 \\
  amdim &        resnet101 &              True &               -0.91 &        0.64 &          3.70 &         0.00 &             25.99 &       4.34 &             63.56 \\
  amdim &         resnet18 &              True &                0.33 &        0.23 &          1.15 &         0.00 &              8.66 &       1.38 &             62.84 \\
 mocov2 &         resnet18 &             False &                1.43 &        0.15 &          1.24 &         0.03 &             14.14 &       1.43 &             67.60 \\
 simclr &         resnet18 &             False &                1.43 &        0.28 &          0.79 &         0.36 &             13.35 &       1.43 &             82.50 \\
  amdim &  wide\_resnet50\_2 &              True &                1.60 &        0.69 &          2.46 &         0.00 &             19.20 &       3.15 &             64.38 \\
 simclr &         resnet50 &             False &                1.97 &        0.22 &          0.78 &         0.97 &             15.75 &       1.97 &             92.00 \\
 simclr &         resnet50 &             False &                2.24 &        0.52 &          1.71 &         0.01 &             19.53 &       2.24 &             84.94 \\
 mocov2 &         resnet50 &             False &                2.72 &        0.30 &          2.96 &         0.00 &             24.18 &       3.26 &             70.09 \\
 mocov2 &        resnet101 &             False &                2.82 &        0.33 &          3.03 &         0.00 &             22.78 &       3.36 &             69.08 \\
 mocov2 &  wide\_resnet50\_2 &             False &                3.11 &        0.38 &          2.79 &         0.00 &             22.39 &       3.18 &             70.84 \\
  amdim &      resnet50\_bn &              True &                3.69 &        0.84 &          4.22 &         0.00 &             31.12 &       5.06 &             66.44 \\
  amdim &         resnet18 &             False &                4.34 &        0.42 &          4.58 &         0.00 &             33.47 &       5.00 &             62.28 \\
  amdim &    amdim\_encoder &              True &                4.43 &        0.68 &          0.36 &         3.39 &             10.32 &       4.43 &             87.33 \\
  amdim &    amdim\_encoder &             False &                6.68 &        2.08 &          5.69 &         0.00 &             70.52 &       7.77 &             87.38 \\
  amdim &        resnet101 &             False &               12.46 &        1.22 &         14.26 &         0.00 &            100.00 &      15.49 &             62.43 \\
  amdim &  wide\_resnet50\_2 &             False &               13.07 &        1.70 &         15.33 &         0.00 &            100.00 &      17.03 &             63.80 \\
  amdim &      resnet50\_bn &             False &               14.73 &        1.81 &         16.63 &         0.00 &            100.00 &      18.43 &             66.28 \\
\bottomrule
\end{tabular}
\end{table}

\begin{table}[ht]
\caption{Summary of all the methods, architectures their corresponding results (gaps and accuracies) on ImageNet, sorted by generalization gap. While Figure~\ref{fig:IN-MAIN} already  plots this data, here we also provide the test performance of the corresponding models.}
\label{tab:all-imagenet}
\footnotesize
\begin{tabular}{lllrrrrlrr}
\toprule
 \makecell{Method} &         Backbone & \makecell{Data \\ Aug} &  \makecell{Generalization \\ Gap} &  \makecell{Robustness} &  \makecell{Mem-\\orization} &  \makecell{Rationality} &  \makecell{Theorem II \\ bound} &  \makecell{RRM \\ bound} &  \makecell{Test \\ Acc} \\
\midrule
 simclrv2 &   r50\_1x\_sk0 &               True &               -2.34 &        0.26 &          0.68 &         0.00 &               46.93 &       0.94 &             70.96 \\
 simclrv2 &  r101\_2x\_sk0 &               True &                0.63 &        0.10 &          0.80 &         0.00 &               47.90 &       0.91 &             77.24 \\
 simclrv2 &  r152\_2x\_sk0 &               True &                1.00 &        0.13 &          0.77 &         0.10 &               NA &       1.00 &             77.65 \\
     moco &    ResNet-50 &               True &                1.32 &        0.57 &          0.93 &         0.00 &               NA &       1.49 &             70.15 \\
  InfoMin &    ResNet-50 &               True &                4.88 &        0.81 &          1.01 &         3.06 &               NA &       4.88 &             72.29 \\
     PiRL &    ResNet-50 &               True &                6.23 &        0.29 &          0.99 &         4.95 &               NA &       6.23 &             60.56 \\
   InsDis &    ResNet-50 &               True &                6.85 &        0.25 &          1.13 &         5.46 &               NA &       6.85 &             58.30 \\
 simclrv2 &  r101\_1x\_sk1 &              False &                8.23 &        0.71 &          4.66 &         2.86 &               NA &       8.23 &             76.07 \\
  InfoMin &    ResNet-50 &              False &               10.21 &        2.34 &          8.96 &         0.00 &               NA &      11.31 &             70.31 \\
 simclrv2 &  r152\_1x\_sk0 &              False &               10.32 &        1.12 &          6.93 &         2.26 &               NA &      10.32 &             74.17 \\
 simclrv2 &  r101\_1x\_sk0 &              False &               10.53 &        1.11 &          6.99 &         2.42 &               NA &      10.53 &             73.04 \\
 simclrv2 &   r50\_1x\_sk0 &              False &               10.62 &        0.99 &          7.31 &         2.31 &               NA &      10.62 &             70.69 \\
     moco &    ResNet-50 &              False &               10.72 &        1.82 &          7.86 &         1.04 &               NA &      10.72 &             68.39 \\
 simclrv2 &  r152\_2x\_sk0 &              False &               10.92 &        0.75 &          7.45 &         2.72 &               NA &      10.92 &             77.25 \\
 simclrv2 &  r101\_2x\_sk0 &              False &               11.02 &        0.74 &          7.51 &         2.78 &               NA &      11.02 &             76.72 \\
   simclr &  ResNet50\_1x &              False &               11.07 &        1.22 &          7.73 &         2.13 &               NA &      11.07 &             68.73 \\
 simclrv2 &    ResNet-50 &              False &               11.16 &        0.64 &          7.67 &         2.85 &               NA &      11.16 &             74.99 \\
     PiRL &    ResNet-50 &              False &               11.43 &        1.49 &          8.26 &         1.68 &               NA &      11.43 &             59.11 \\
   InsDis &    ResNet-50 &              False &               12.02 &        1.40 &          8.52 &         2.10 &               NA &      12.02 &             56.67 \\
    amdim &    ResNet-50 &              False &               13.62 &        0.90 &          9.72 &         3.01 &               NA &      13.62 &             67.69 \\
      CMC &    ResNet-50 &              False &               14.73 &        2.30 &         12.30 &         0.13 &               NA &      14.73 &             54.60 \\
 bigbigan &    ResNet-50 &              False &               29.60 &        3.13 &         25.19 &         1.27 &               NA &      29.60 &             50.24 \\
\bottomrule
\end{tabular}
\end{table}

\normalsize

\footnotesize
\begin{table}[hb]
\caption{\label{tab:hyperparam-table-cifar} Summary of training methods with their hyper-parameters for CIFAR-10}
\centering
\begin{tabular}{@{}cccccccc@{}} 
    \toprule
    \makecell{Self- \\ supervised \\ method}  & \makecell{Backbone \\ Architectures} & \makecell{Self-supervised \\ Training} & Evaluation & \makecell{Simple \\ Phase \\ Optimization}\\ 
    \midrule

    \multirow{5}{*}{AMDIM}  & AMDIM Encoder  & \multirow{5}{*}{\makecell{PLB \\ Default \\ parameters}}  & \multirow{5}{*}{Linear}    & \multirow{5}{*}{\makecell{Adam \\ $\beta_1 = 0.8$ $\beta_2 = 0.999$ \\ Constant LR = 2e-4 \\ Batchsize = 500 \\ Weight decay = 1e-6}}\\ 
    
                        & ResNet-18 &   &     &  &  & \\ 
                        
                        & ResNet-50 &   &     &  &  &  \\ 
                        
                        & WideResNet-50 &   &     &  &  &  \\     
                        
                        & ResNet 101 &   &     &  &  \\
           \\             
    \hline
    \\            
    \multirow{5}{*}{MoCoV2}  & ResNet-18 & \multirow{5}{*}{\makecell{PLB \\ Default \\ parameters}}  & \multirow{5}{*}{Linear}    & \multirow{5}{*}{\makecell{Adam \\ $\beta_1 = 0.8$ $\beta_2 = 0.999$ \\ Constant LR = 2e-4 \\ Batchsize = 500 \\ Weight decay = 1e-6}} \\

                        & ResNet-50 &   &     &  &  &  \\ 
                        
                        & WideResNet-50 &   &     &  &  &  \\     
                        
                        & ResNet 101 &   &     &  &  \\

     \\
    \hline
    \\
    
    \multirow{4}{*}{SimCLR}  & ResNet-18 & \multirow{2}{*}{\makecell{Batchsize = 128 \\ Epochs 200}}  & \multirow{4}{*}{Linear}    & \multirow{4}{*}{\makecell{SGD \\ Momentum = 0.9 \\ Constant LR = 0.1 \\ Weight decay 1e-6}} \\ 
    
                        & ResNet-50  &   &     &  &  &  \\ 
                        
                        & &   &     &  &  & \\
                        
                        & ResNet-50 & \makecell{Batchsize = 512 \\ Epochs 600}  &     &  &  &  \\

    \bottomrule
\end{tabular} \\
\end{table}
\normalsize

\footnotesize
\begin{table}[ht]
\caption{\label{tab:hyperparam-table-imagenet} Summary of training methods with their hyper-parameters for ImageNet}

\begin{tabular}{@{}cccccccc@{}} 
    \toprule
    \makecell{Self-supervised \\ method}  & \makecell{Backbone \\ Architecture} & \makecell{Pre-trained \\ Model} & Evaluation & \makecell{Optimization} & \makecell{Weight \\ Decay} & Epochs \\ 
    \midrule
    
    \makecell{Instance \\ Discrimination}  & ResNet-50 & PyContrast  & Linear    & \makecell{SGD \\ Momentum = 0.9 \\ Initial LR = 30 \\ LR drop at \{30\} \\ by factor 0.2} & 0 & 40 \\ 
    
    \\
    \hline
    \\
    
    \makecell{MoCo}  & ResNet-50 & Official  & Linear    & \makecell{SGD \\ Momentum = 0.9 \\ Initial LR = 30 \\ LR drop at \{30\} \\ by factor 0.2} & 0 & 40 \\ 
    
    \\
    \hline
    \\
    
    \makecell{PiRL}  & ResNet-50 & PyContrast  & Linear    & \makecell{SGD \\ Momentum = 0.9 \\ Initial LR = 30 \\ LR drop at \{30\} \\ by factor 0.2} & 0 & 40 \\ 
    
    \\
    \hline
    \\
    
    \makecell{CMC}  & ResNet-50 & PyContrast  & Linear    & \makecell{SGD \\ Momentum = 0.9 \\ Initial LR = 30 \\ LR drop at \{30\} \\ by factor 0.2} & 0 & 40 \\ 
    
    \\
    \hline
    \\
    
    \makecell{AMDIM}  & AMDIM Encoder & Official  & Linear    & \makecell{SGD \\ Momentum = 0.9 \\ Initial LR = 30 \\ LR drop at \{15, 25\} \\ by factor 0.2} & 1e-3 & 40 \\   
    
    \\
    \hline
    \\
    
    \makecell{BigBiGAN}  & ResNet-50 & Official  & Linear    & \makecell{SGD \\ Momentum = 0.9 \\ Initial LR = 30 \\ LR drop at \{15, 25\} \\ by factor 0.2} & 1e-5 & 40 \\       
    
    \\
    \hline
    \\
    
    \multirow{2}{*}{SimCLRv1}  & ResNet-50 1x & \multirow{2}{*}{Official}  & \multirow{2}{*}{Linear}    & \multirow{2}{*}{\makecell{SGD \\ Momentum = 0.9 \\ Constant LR = 0.1}} & \multirow{2}{*}{1e-6} & \multirow{2}{*}{40} \\ 
    
                        & ResNet-50 4x &   &     &  &  &  \\ 
    
    \\
    \hline
    \\
    
    \multirow{4}{*}{SimCLRv2}  & ResNet-50 1x SK0 & \multirow{4}{*}{Official}  & \multirow{4}{*}{Linear}    & \multirow{4}{*}{\makecell{SGD \\ Momentum = 0.9 \\ Constant LR = 0.1}} & \multirow{4}{*}{1e-6} & \multirow{4}{*}{40} \\ 
    
                        & ResNet-101 2x SK0 &   &     &  &  &  \\ 
                        
                        & ResNet-152 2x SK0 &   &     &  &  &  \\ 
                        
                        & ResNet-152 3x SK0 &   &     &  &  &  \\ 
    
    \bottomrule
\end{tabular} \\
\end{table}

\end{document}